\documentclass{article}

\usepackage[numbers]{natbib} 

\usepackage[preprint]{neurips_2026}

\usepackage{amsmath}
\usepackage{amssymb}
\usepackage{mathtools}
\usepackage{amsthm}

\newtheorem{proposition}{Proposition}
\newtheorem*{proposition*}{Proposition}

\newtheorem{theorem}{Theorem}
\newtheorem*{theorem*}{Theorem}  

\theoremstyle{plain}

\newtheorem{corollary}[theorem]{Corollary}
\theoremstyle{definition}
\newtheorem{definition}[theorem]{Definition}
\newtheorem{assumption}[theorem]{Assumption}
\theoremstyle{remark}
\newtheorem{remark}[theorem]{Remark}

\usepackage[utf8]{inputenc} 
\usepackage[T1]{fontenc}    
\usepackage{hyperref}       
\usepackage{url}            
\usepackage{booktabs}       
\usepackage{amsfonts}       
\usepackage{nicefrac}       
\usepackage{microtype}      
\usepackage{xcolor}         
\usepackage{algorithm}
\usepackage{algorithmic}
\usepackage{wrapfig}        

\usepackage[capitalize,noabbrev]{cleveref}

\title{Safe Evolution with Circuit Anchors}

\author{
  Yan Liu \\
  Chinese University of Hong Kong\\
  \texttt{runningmelles@gmail.com} \\
  \And
  Jie Fu \\
  IQuest Research \\
  \texttt{} \\
  \\
  \And
  Tsung-Yi Ho \\
  Chinese University of Hong Kong \\
  \texttt{tyho@cse.cuhk.edu.hk} \\
}

\begin{document}

\maketitle

\begin{abstract}
In biological evolution, unconstrained mutation can lead to 
catastrophic outcomes: organisms may evolve enhanced capabilities 
while losing essential functions for survival. Nature's solution 
is \textit{developmental constraints}, where core regulatory genes 
remain anchored while peripheral genes adapt freely. We observe 
that current self-evolution algorithms for large language models 
lack analogous constraints. They optimize purely for capability, 
implicitly assuming safety will be preserved. Our experiments 
reveal this assumption to be dangerously wrong: models can 
\textit{misevolve} into powerful yet dangerous entities.
Inspired by how Hox genes anchor body structure across $500$ million 
years of evolution, we propose \textbf{Circuit-Anchored Evolution (CAE)}. 
Using mechanistic interpretability, we identify a tiny 
\textit{safety circuit}, comprising less than $2$\% of model features, 
that causally mediates safety behaviors. We anchor this circuit 
during evolution, constraining it within a small displacement bound 
while allowing the remaining features to evolve freely. This 
mirrors the biological principle of \textit{evolvability with 
constraint}: preserving what is essential while adapting what is 
peripheral.
Experiments across $3$ model families and two evolution algorithms 
demonstrate that CAE achieves superior safety 
preservation with minimal capability loss, substantially outperforming 
explicit reward-based constraints in both effectiveness and efficiency. Just as developmental constraints 
prevent biological evolution from producing nonviable organisms, 
circuit anchoring prevents model evolution from producing capable 
but dangerous systems.
\end{abstract}

\section{Introduction}
Evolution is a double-edged sword \cite{miconi2008evolution,hanley2011double,perc2015double}. In biology, it has produced remarkable adaptations, from the eagle's eye to the human brain \cite{emery2005evolution,williams2023eagle}. 
But unconstrained evolution can also produce monsters: organisms with enhanced capabilities but fatal deficiencies \cite{alberch1989logic}. A mutation that improves muscle strength means nothing if it simultaneously disrupts heart development.
Nature's solution for safe evolution is elegant: anchor critical genes during evolution.
A small set of \textit{master regulatory genes}, such as the Hox genes controlling body plan \cite{hughes2002hox,mallo2010hox}, remains fiercely conserved across hundreds of millions of years \cite{pearson2005modulating}. These genetic anchors ensure that evolution explores new capabilities 
without compromising fundamental viability. Mutations in Hox genes are almost always lethal \cite{goodman2001human}, eliminated by purifying selection \cite{brunet2021role} before they can propagate. This principle of \textit{evolvability with constraint} \cite{sharov2014evolutionary} is what allows species to adapt without self-destructing.

Artificial intelligence is now undergoing its own evolutionary process \cite{gao2025survey,tao2024surveyselfevolutionlargelanguage}.
The pursuit of artificial general intelligence (AGI) has increasingly embraced self-evolutionary paradigms \cite{dr0,self-evol1}.
Recent advances demonstrate that such approaches can yield substantial gains in reasoning \cite{feng2025group,novikov2025alphaevolve}, surpassing models trained on static human-curated datasets. 
However, this progress introduces a critical yet overlooked risk. Current self-evolution algorithms operate in an entirely unconstrained manner, optimizing solely for task performance without any mechanism to preserve alignment with human values. 
\begin{wrapfigure}{r}{0.48\textwidth}
  \vspace{-18pt}
  \begin{center}
    \includegraphics[width=\linewidth]{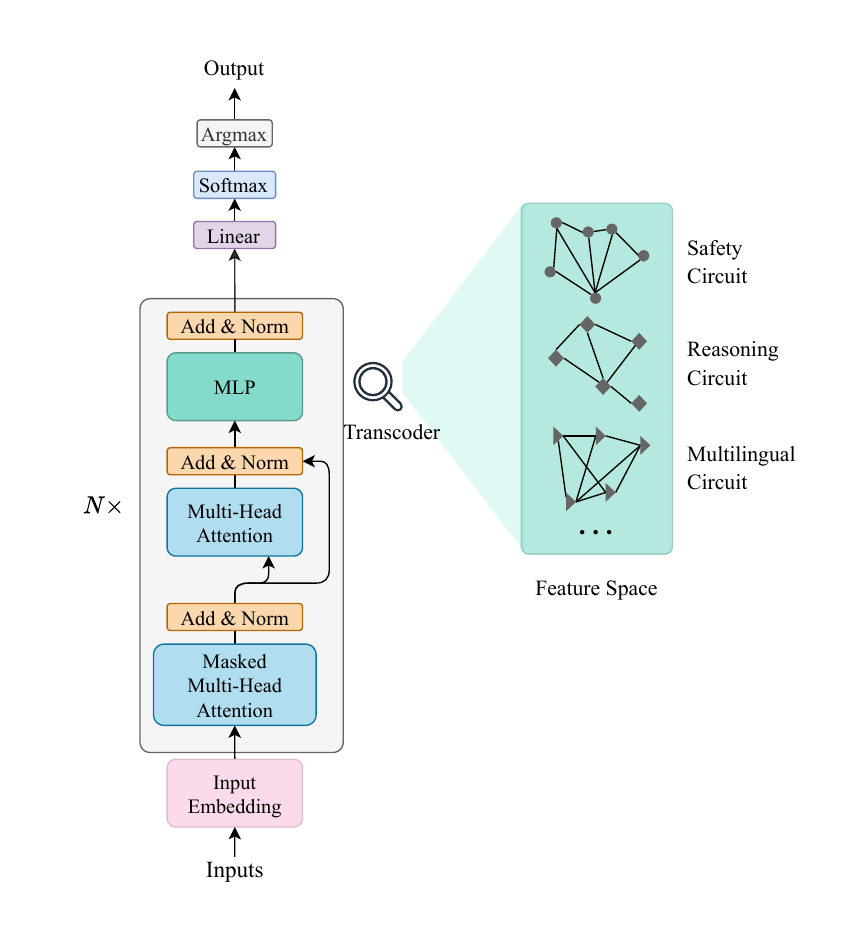}
  \end{center}
  \vspace{-10pt}
  \caption{Overview of our approach. The Transcoder decomposes MLP activations into interpretable features, revealing functionally distinct circuits. CAE preserves the Safety Circuit via targeted constraints while allowing other circuits to evolve freely.}
  \label{fig:main}
  \vspace{-10pt}
\end{wrapfigure}
This creates significant potential for \textit{misevolution}, where models evolve in unintended directions that produce undesirable or even harmful behaviors~\citep{misevolve}. 
As models grow increasingly powerful through self-evolution, this unconstrained evolution poses a serious threat. Without proper safeguards, we risk creating systems that are highly capable and fundamentally dangerous.

Recent advances in mechanistic interpretability reveal that model behaviors are localized in specific circuits \cite{tigges2024llmcircuitanalysesconsistent}. The biological analogy motivates our approach: \textbf{Circuit-Anchored Evolution (CAE)}. 
As shown in Figure \ref{fig:main}, using transcoder-based analysis \cite{transcoder}, we identify a tiny \textit{safety circuit}: a sparse set of features comprising less than 2\% of the total, yet causally responsible for safe behaviors.
Like Hox genes, this safety circuit is tiny but critical; disrupting it leads to catastrophic results.
Rather than treating safety as an optimization objective that competes with capability, we \textit{anchor} the safety circuit in evolution by constraining the KL divergence.
This constraint acts as an artificial purifying selection, preventing ``mutations'' that would disrupt the safety 
circuit. The remaining features, like peripheral genes, are free to evolve without constraint. 
Our circuit anchoring represents an \textit{implicit} constraint 
operating on internal structure, in contrast to \textit{explicit} 
constraints that supervise external behavior via reward models. 
To understand the trade-offs, we systematically compare both paradigms.
Through extensive experiments on different models and evolution algorithms, we find that implicit anchoring substantially outperforms 
explicit supervision. Circuit anchoring achieves higher safety 
preservation with minimal capability loss and significantly lower 
computational overhead. 
Just as developmental constraints enable biological species to evolve new adaptations without losing viability, circuit anchoring enables language models to evolve new capabilities without 
losing safety.

\section{Preliminaries}
\label{preliminaries}

\subsection{Notation}

Let $\mathcal{M}_\theta: \mathcal{X} \to \mathcal{Y}$ denote a large 
language model parameterized by $\theta \in \Theta$, where $\mathcal{X}$ 
is the space of input sequences and $\mathcal{Y}$ is the space of 
output sequences. For an input $x \in \mathcal{X}$, we denote the 
hidden state at layer $l \in [L] := \{1, \ldots, L\}$ as 
$h_l(x; \theta) \in \mathbb{R}^d$. We use $\pi_\theta(\cdot | x)$ 
to denote the output distribution over next tokens. Throughout, 
we use $\|\cdot\|$ to denote the $\ell_2$ norm and $D_{KL}(\cdot \| \cdot)$ 
to denote the Kullback-Leibler divergence.

\subsection{Transcoders and Feature Decomposition}

We adopt the transcoder framework introduced by \citet{transcoder}, 
which provides an interpretable decomposition of MLP activations into 
sparse feature vectors.

\begin{definition}[Transcoder] 
\label{def:transcoder} 
A transcoder $\mathcal{T}_l = (E_l, D_l)$ for layer $l$ consists of an encoder $E_l: \mathbb{R}^d \to \mathbb{R}^K$ and a decoder $D_l: \mathbb{R}^K \to \mathbb{R}^d$, trained to satisfy $h_l(x; \theta) \approx D_l(E_l(h_l(x; \theta))) + \epsilon_l(x)$, where $\epsilon_l(x)$ is a residual error term. The encoder produces a sparse feature activation vector $f_l(x; \theta) := E_l(h_l(x; \theta)) \in \mathbb{R}^K$, where $K \gg d$ and $\|f_l(x; \theta)\|_0 \ll K$. 
\end{definition}

\begin{definition}[Cross-Layer Feature Vector] 
\label{def:cross_layer} 
For a model with $L$ layers and transcoders $\{\mathcal{T}_l\}_{l=1}^L$, we define the concatenated feature vector as $f(x; \theta) := \bigoplus_{l=1}^{L} f_l(x; \theta) \in \mathbb{R}^{LK}$, where $\oplus$ denotes concatenation. We index individual features as $f^{(l,k)}(x; \theta)$ for layer $l$ and feature index $k$. 
\end{definition}

\subsection{Self-Evolutionary Training}

We consider the general framework of self-evolutionary training, 
where a model iteratively improves by learning from self-generated 
data. We first formalize two representative algorithms, then show 
they share a common mathematical structure.

\subsubsection{GRPO-Based Evolution (EVOL-RL)}

Following \cite{shao2024deepseekmath}, Group Relative Policy 
Optimization (GRPO) generates $G$ responses $\{o_1, \ldots, o_G\}$ 
for each prompt $q$ and computes normalized advantages:
\begin{equation}
    \hat{A}_i = \frac{r_i - \mathrm{mean}(r_1, \ldots, r_G)}{\mathrm{std}(r_1, \ldots, r_G)}
\end{equation}
The policy is updated via the clipped surrogate objective:
\begin{equation}
\label{eq:grpo}
\resizebox{0.7\linewidth}{!}{$\displaystyle
    \mathcal{L}_{\text{GRPO}}(\theta) = \frac{1}{G} \sum_{i=1}^{G} 
    \frac{1}{|o_i|} \sum_{t=1}^{|o_i|} \min\left\{ \rho_{i,t} \hat{A}_{i,t}, 
    \mathrm{clip}(\rho_{i,t}, 1-\epsilon, 1+\epsilon) \hat{A}_{i,t} \right\}
$}
\end{equation}
where $\rho_{i,t} = \frac{\pi_\theta(o_{i,t} | q, o_{i,<t})}{\pi_{\theta_{\text{old}}}(o_{i,t} | q, o_{i,<t})}$ 
is the importance ratio. EVOL-RL applies this objective iteratively 
on self-generated reasoning tasks.

\subsubsection{Zero Data Evolution (Abs-ZERO)}

The Abs-ZERO framework \citep{abs0} jointly trains a proposer 
policy $\pi_\theta^{\text{propose}}$ and a solver policy 
$\pi_\theta^{\text{solve}}$ with objective:
\begin{equation}
\label{eq:abszero}
\resizebox{0.7\linewidth}{!}{$\displaystyle
    \mathcal{J}_{\text{AZ}}(\theta) = \mathbb{E}_{z \sim p(z)} \left[
    \mathbb{E}_{\tau, (x, y^*)} \left[ r_e^{\text{propose}}(\tau, \pi_\theta)
    + \lambda \mathbb{E}_{y \sim \pi_\theta^{\text{solve}}} \left[
    r_e^{\text{solve}}(y, y^*) \right] \right] \right]
$}
\end{equation}
where $\tau$ is a proposed task, $(x, y^*)$ is the task-answer pair, 
and $r_e^{\text{propose}}$, $r_e^{\text{solve}}$ are rewards for 
task proposal and solving respectively.

\subsubsection{Unified Formulation}
\label{sec:unified_formulation}

Despite their different designs, both algorithms share a common structure: iteratively optimizing model parameters to maximize a task-specific reward signal. We abstract this as $\theta^{(t+1)} = \theta^{(t)} + \eta \nabla_\theta \mathcal{L}_{\text{evol}}(\theta^{(t)})$, where $\mathcal{L}_{\text{evol}}$ represents the evolution objective, which can be instantiated as $\mathcal{L}_{\text{GRPO}}$ or $\mathcal{J}_{\text{AZ}}$ depending on the algorithm.
More generally, our framework applies to any self-evolution algorithm that satisfies three conditions: (1) it optimizes model parameters $\theta$ via gradient-based updates, (2) the optimization objective $\mathcal{L}_{\text{evol}}$ is differentiable with respect to $\theta$, and (3) it does not explicitly constrain safety-related behaviors.

\subsection{Problem Formulation: Safety Degradation}

We now formalize the phenomenon of safety degradation during evolution.

\begin{definition}[Safety Behavior] 
\label{def:safety} 
Let $\mathcal{D}_{\text{harm}} = \{x_i^{\text{harm}}\}_{i=1}^n$ be a set of harmful prompts. A model $\mathcal{M}_\theta$ exhibits safe behavior if for all $x \in \mathcal{D}_{\text{harm}}$, we have $\pi_\theta(y_{\text{refuse}} | x) \geq \tau$, where $y_{\text{refuse}}$ denotes refusal responses and $\tau \in (0, 1)$ is a safety threshold. 
\end{definition}

\begin{definition}[Safety Degradation]
\label{def:safety_degradation}
Let $\theta_0$ be a safely-aligned model satisfying 
Definition~\ref{def:safety}. We say evolutionary training induces 
\textit{safety degradation} if there exists $T > 0$ such that:
\begin{equation}
\resizebox{0.6\linewidth}{!}{$\displaystyle
    \mathbb{E}_{x \sim \mathcal{D}_{\text{harm}}} \left[
    \pi_{\theta^{(T)}}(y_{\text{refuse}} | x) \right] <
    \mathbb{E}_{x \sim \mathcal{D}_{\text{harm}}} \left[
    \pi_{\theta_0}(y_{\text{refuse}} | x) \right] - \delta
$}
\end{equation}
for some $\delta > 0$.
\end{definition}
Our goal is to design an evolutionary framework that maximizes task 
performance while provably bounding safety degradation.

\begin{algorithm}[t]
\caption{Circuit-Anchored Evolution (CAE)}
\label{alg:cae}
\begin{algorithmic}[1]
\REQUIRE Safely-aligned model $\theta_0$, transcoders $\{\mathcal{T}_l\}_{l=1}^L$, 
         safety circuit $\mathcal{S}$, constraint weight $\lambda$, 
         learning rate $\eta$, evolution objective $\mathcal{L}_{\text{evol}}$
\ENSURE Evolved model $\theta^{(T)}$ with preserved safety
\STATE Initialize $\theta \leftarrow \theta_0$
\STATE Cache reference activations: $\{f_{\mathcal{S}}(x; \theta_0)\}_{x \in \mathcal{D}_{\text{ref}}}$
\FOR{$t = 1$ to $T$}
    \STATE \textcolor{gray}{// Standard evolution step}
    \STATE Compute evolution gradient: $g_{\text{evol}} \leftarrow 
           \nabla_\theta \mathcal{L}_{\text{evol}}(\theta)$
    \STATE \textcolor{gray}{// Circuit anchoring step}
    \STATE Sample $\{x_m\}_{m=1}^M$ from reference distribution
    \FOR{each $x_m$}
        \STATE Compute current activations via frozen transcoders: 
               $f_{\mathcal{S}}(x_m; \theta)$
        \STATE Compute KL: $\ell_m \leftarrow \sum_{(l,k) \in \mathcal{S}} 
               D_{KL}(f^{(l,k)}(x_m; \theta_0) \| f^{(l,k)}(x_m; \theta))$
    \ENDFOR
    \STATE Compute anchor gradient: $g_{\text{anchor}} \leftarrow 
           \frac{1}{M} \sum_m \nabla_\theta \ell_m$
    \STATE \textcolor{gray}{// Combined update}
    \STATE $\theta \leftarrow \theta + \eta \left( g_{\text{evol}} 
           - \lambda \cdot g_{\text{anchor}} \right)$
\ENDFOR 

\RETURN $\theta^{(T)}$
\end{algorithmic}
\end{algorithm}

\section{Circuit-Anchored Evolution}
We summarize the complete CAE procedure in Algorithm~\ref{alg:cae}.

\subsection{Safety Circuit Identification}

We leverage the circuit tracing methodology \citep{hanna2025circuit} 
to identify features causally responsible for safety.

\begin{definition}[Attribution Score]
\label{def:attribution}
For a feature $(l, k)$ and target logit $y$, the direct attribution 
score is defined as:
\begin{equation}
    \alpha^{(l,k)}_y(x; \theta) := \frac{\partial \log \pi_\theta(y | x)}{\partial f^{(l,k)}(x; \theta)} \cdot f^{(l,k)}(x; \theta)
\end{equation}
This measures the causal contribution of feature $(l, k)$ to the 
probability of output $y$.
\end{definition}

\begin{definition}[Safety Circuit]
\label{def:safety_circuit}
Given a safely-aligned model $\theta_0$, a harmful prompt set 
$\mathcal{D}_{\text{harm}}$, and a threshold $\gamma > 0$, the safety circuit is defined as:
\begin{equation}
    \mathcal{S} := \left\{ (l, k) : \mathbb{E}_{x \sim \mathcal{D}_{\text{harm}}} 
    \left[ \alpha^{(l,k)}_{y_{\text{refuse}}}(x; \theta_0) \right] \geq \gamma \right\}
\end{equation}
We denote $f_{\mathcal{S}}(x; \theta) := \{f^{(l,k)}(x; \theta)\}_{(l,k) \in \mathcal{S}}$ 
as the safety feature vector.
\end{definition}

\begin{remark}
Safety circuit $\mathcal{S}$ is computed once from the aligned model $\theta_0$ and remains fixed in evolution. This is justified by the finding that circuit structure is largely preserved across 
fine-tuning~\cite{tigges2024llmcircuitanalysesconsistent}.
\end{remark}

\subsection{Feature-Space Anchoring}

We now introduce our core contribution: a constraint that preserves safety circuit activations during evolution.

\begin{definition}[Circuit Activation Distribution]
\label{def:activation_dist}
For a given input distribution $p(x)$ and safety circuit $\mathcal{S}$, we define the circuit activation distribution as:
\begin{equation}
    P_\theta^{\mathcal{S}}(f_{\mathcal{S}}) := \mathbb{E}_{x \sim p(x)} 
    \left[ \mathbf{1}[f_{\mathcal{S}}(x; \theta) = f_{\mathcal{S}}] \right]
\end{equation}
In practice, we treat $f_{\mathcal{S}}(x; \theta)$ as samples from 
this distribution.
\end{definition}

\begin{definition}[Circuit-Anchored Objective]
\label{def:cao}
The Circuit-Anchored Evolution (CAE) objective augments the base 
evolutionary loss with a circuit-level KL constraint:
\begin{equation}
\label{eq:cao}
    \mathcal{L}_{\text{CAE}}(\theta) := \mathcal{L}_{\text{evol}}(\theta) 
    - \lambda \cdot \mathcal{L}_{\text{anchor}}(\theta)
\end{equation}
where
\begin{equation}
\label{eq:anchor_loss}
\resizebox{0.65\linewidth}{!}{$\displaystyle
\begin{aligned}
    \mathcal{L}_{\text{anchor}}(\theta) &:= D_{KL}\left( P_{\theta_0}^{\mathcal{S}}
    \| P_{\theta}^{\mathcal{S}} \right) \\
    &= \mathbb{E}_{x \sim p(x)} \left[
    \sum_{(l,k) \in \mathcal{S}} D_{KL}\left( f^{(l,k)}(x; \theta_0) \|
    f^{(l,k)}(x; \theta) \right) \right]
\end{aligned}
$}
\end{equation}
and $\lambda > 0$ is a hyperparameter controlling the constraint strength.
\end{definition}

\subsection{Instantiation for Specific Algorithms}

The CAE framework is algorithm-agnostic and can be instantiated for 
any self-evolution algorithm satisfying the conditions in 
Section~\ref{sec:unified_formulation}.

\paragraph{CAE for GRPO-Based Evolution.}
For EVOL-RL, we augment the GRPO objective (Eq.~\ref{eq:grpo}):
\begin{equation}
    \mathcal{L}_{\text{CAE-GRPO}}(\theta) = \mathcal{L}_{\text{GRPO}}(\theta) 
    - \lambda \cdot \mathcal{L}_{\text{anchor}}(\theta)
\end{equation}

\paragraph{CAE for Abs-ZERO.}
For Abs-ZERO, we augment the joint objective (Eq.~\ref{eq:abszero}):
\begin{equation}
    \mathcal{J}_{\text{CAE-AZ}}(\theta) = \mathcal{J}_{\text{AZ}}(\theta) 
    - \lambda \cdot \mathcal{L}_{\text{anchor}}(\theta)
\end{equation}
The anchor constraint is applied to both proposer and solver policies, 
as they share underlying parameters.

\paragraph{General Applicability.}
More broadly, CAE applies to any evolution algorithm of the form 
$\theta^{(t+1)} = \theta^{(t)} + \eta \nabla_\theta \mathcal{L}_{\text{evol}}(\theta^{(t)})$ 
by simply adding the anchor term:
\begin{equation}
    \theta^{(t+1)} = \theta^{(t)} + \eta \nabla_\theta \left( 
    \mathcal{L}_{\text{evol}}(\theta^{(t)}) - \lambda \cdot 
    \mathcal{L}_{\text{anchor}}(\theta^{(t)}) \right)
\end{equation}
This includes Self-Play Fine-Tuning, Self-Rewarding LMs, iterative DPO, and future gradient-based evolution methods.

\subsection{Theoretical Analysis}

We now establish key theoretical properties of the CAE objective.

\begin{assumption} 
\label{ass:regularity} 
We assume the following regularity conditions: \textbf{(1) Bounded activations:} There exists $B > 0$ such that $\|f(x; \theta)\|_\infty \leq B$ for all $x, \theta$. \textbf{(2) Lipschitz continuity:} The feature map $\theta \mapsto f_{\mathcal{S}}(x; \theta)$ is $L_f$-Lipschitz for all $x$. \textbf{(3) Almost-everywhere differentiability:} The encoder $E_l$ is differentiable almost everywhere with bounded Jacobian $\|J_{E_l}\| \leq M$. 
\end{assumption}

\begin{remark}[Justification of Assumption~\ref{ass:regularity}]
\label{rem:relu_justification}
Condition 3 is naturally satisfied by standard transcoder architectures employing ReLU activations. ReLU is differentiable everywhere except at exactly zero, which occurs with probability zero under continuous input distributions. Furthermore, its Jacobian is a diagonal matrix with entries in $\{0, 1\}$, strictly satisfying the bounded Jacobian condition with $M=1$. Analyzing such networks with gradients defined almost everywhere is standard in deep learning theory.
\end{remark}

\begin{proposition}[Gradient of Anchor Loss] 
\label{prop:gradient} 
Under Assumption~\ref{ass:regularity} (Condition 3), the gradient of the anchor 
loss admits the form:
\begin{equation}
\resizebox{0.65\linewidth}{!}{$\displaystyle
    \nabla_\theta \mathcal{L}_{\text{anchor}}(\theta) = \mathbb{E}_{x \sim p(x)}
    \left[ \sum_{(l,k) \in \mathcal{S}} \left( 1 + \log \frac{f^{(l,k)}(x; \theta)}{f^{(l,k)}(x; \theta_0)} \right)
    \nabla_\theta f^{(l,k)}(x; \theta) \right]
$}
\end{equation}
where the gradient flows through the frozen transcoder encoder.
\end{proposition}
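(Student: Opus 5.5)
The plan is a direct chain-rule computation. First make precise what the scalar "KL" between two nonnegative activations means. Then differentiate pointwise in $x$. Finally justify moving $\nabla_\theta$ inside $\mathbb{E}_{x\sim p(x)}$ and the finite sum over $\mathcal{S}$.

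\emph{Step 1: the per-feature divergence.} The coefficient $1+\log\bigl(f^{(l,k)}(x;\theta)/f^{(l,k)}(x;\theta_0)\bigr)$ is exactly $\phi'(u)$ for
\[
\phi(u)=u\log\frac{u}{c}, \qquad c=f^{(l,k)}(x;\theta_0).
\]
So the statement holds when each summand in Eq.~\ref{eq:anchor_loss} is read as the unnormalized generalized-KL integrand of the current activation against the frozen reference, $f^{(l,k)}(x;\theta)\log\bigl(f^{(l,k)}(x;\theta)/f^{(l,k)}(x;\theta_0)\bigr)$. I will make this identification explicit.

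I also record a caveat. With the argument order literally as written, $c\log(c/u)$, the derivative would instead be $-\bigl(f^{(l,k)}(x;\theta_0)/f^{(l,k)}(x;\theta)\bigr)\nabla_\theta f^{(l,k)}$. The proof must therefore fix the convention that matches the displayed formula.

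To keep the logarithms finite, I restrict to the set where both $f^{(l,k)}(x;\theta)>0$ and $f^{(l,k)}(x;\theta_0)>0$, using the convention $0\log 0=0$ elsewhere. Equivalently, I replace $f$ by $f+\varepsilon$ and let $\varepsilon\to0$ at the end.

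\emph{Step 2: pointwise gradient.} Fix $x$ and $(l,k)\in\mathcal{S}$. By Condition~3 of Assumption~\ref{ass:regularity} and Remark~\ref{rem:relu_justification}, the map $\theta\mapsto f^{(l,k)}(x;\theta)=[E_l(h_l(x;\theta))]_k$ is differentiable for almost every $x$. Its gradient is $J_{E_l}^\top\nabla_\theta h_l$, with the encoder weights frozen, which is the sense in which "the gradient flows through the frozen transcoder encoder." The reference $f^{(l,k)}(x;\theta_0)$ does not depend on $\theta$. The chain rule applied to $\phi\circ f^{(l,k)}$ then gives
\[
\bigl(1+\log(f^{(l,k)}(x;\theta)/f^{(l,k)}(x;\theta_0))\bigr)\,\nabla_\theta f^{(l,k)}(x;\theta).
\]
Summing over the finite set $\mathcal{S}$ is linear, so no issue arises there.

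\emph{Step 3: exchanging $\nabla_\theta$ and $\mathbb{E}_x$.} This is the main obstacle. The plan is to use dominated convergence via the mean value theorem. Condition~2 of Assumption~\ref{ass:regularity} (the $L_f$-Lipschitz feature map) bounds $\|\nabla_\theta f^{(l,k)}\|\le L_f$. Condition~1 bounds $f\le B$, which controls the $\log f$ term from above.

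The difficulty is the lower end, where $|\log f|$ blows up as $f\to0$ at ReLU boundaries. I would first try the $\varepsilon$-smoothed version: there the integrand derivative is bounded by $|\mathcal{S}|\,(1+\log((B+\varepsilon)/\varepsilon))\,L_f$ uniformly, so the exchange is justified. The kink set of ReLU has measure zero under a continuous $p(x)$ by Remark~\ref{rem:relu_justification}, so it does not affect the expectation.

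The final move is to let $\varepsilon\to0$, assuming the activations on $\mathcal{S}$ are bounded away from zero on the relevant support. Alternatively, one can simply state the result for the smoothed loss, which is what is implemented in practice.
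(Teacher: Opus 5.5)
Your derivation is correct for the loss you actually differentiate, and your caveat is the crux rather than a side remark. Write $p=f^{(l,k)}(x;\theta_0)$ and $q=f^{(l,k)}(x;\theta)$. The displayed coefficient $1+\log(q/p)$ is the derivative of the \emph{reversed} integrand $q\log(q/p)$. So swapping the arguments of $\mathcal{L}_{\text{anchor}}$ is what the proposition requires, not a weakness of your route.

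The paper argues differently, and its decisive step is invalid. It keeps the forward direction $D_{KL}(p\|q)$, with softmax normalization over features, and correctly obtains $-\sum_i (p_i/q_i)\nabla_\theta q_i$; this is exactly your caveat. It then switches to an equal-variance Gaussian model, obtaining $(q-p)\sigma^{-2}\nabla_\theta q$. Finally it asserts a ``score function identity'' $\nabla_\theta D_{KL}(p\|q)=\mathbb{E}_p[\nabla_\theta\log q]=(1+\log(q/p))\nabla_\theta q$. The first equality has the wrong sign, and the second is false, since $\mathbb{E}_p[\nabla_\theta\log q]=\sum_i (p_i/q_i)\nabla_\theta q_i$. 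Under normalization, the coefficients $-p_i/q_i$ and $1+\log(q_i/p_i)$ differ by $2$ plus second-order terms in $q-p$, and the constant is annihilated by $\sum_i\nabla_\theta q_i=0$. So the literal forward-KL statement holds only to first order near $\theta_0$.

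Your route also supplies two things the paper omits. First, it handles the $\log 0$ problem at inactive ReLU features, which, given transcoder sparsity, is the typical case. Second, it justifies exchanging $\nabla_\theta$ and $\mathbb{E}_x$ by dominated convergence. That exchange genuinely needs Conditions 1 and 2, since Condition 3 gives no control of $\nabla_\theta h_l$. This exceeds the main-text hypothesis but matches the appendix restatement, which assumes the full Assumption.

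One refinement to your reading. For unnormalized scalars, $u\log(u/c)$ is not a divergence: it is negative for $0<u<c$ and minimized at $u=c/e$. So at $\theta=\theta_0$ your loss has gradient $\mathbb{E}_x\sum_{(l,k)\in\mathcal{S}}\nabla_\theta f^{(l,k)}(x;\theta_0)$, which is generally nonzero. Descending it pulls each activation toward $p/e$ rather than anchoring it at $p$. The actual generalized KL $u\log(u/c)-u+c$ gives the coefficient $\log(q/p)$, without the $+1$.

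The reading under which the displayed formula is both exact and a genuine anchor is the reverse KL $D_{KL}(q\|p)$ between softmax-normalized circuit activations, with $f^{(l,k)}$ in the display denoting normalized activations. There the $+1$ terms contribute $\nabla_\theta\sum_i q_i=0$. Moreover, $\|f\|_\infty\le B$ forces $p_i,q_i\ge e^{-2B}/|\mathcal{S}|$, so the logarithms are uniformly bounded. Your dominated-convergence step then goes through with no $\varepsilon$-smoothing and no bounded-away-from-zero assumption. State that corrected loss at the outset.
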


\begin{proof}
By the chain rule and the definition of KL divergence for continuous 
distributions. The transcoder weights are frozen, so gradients pass 
through $E_l$ to $h_l$ and subsequently to $\theta$. Full derivation 
in Appendix~\ref{app:gradient_proof}.
\end{proof}

\subsection{Safety Preservation Guarantee}

We now prove that CAE provides a formal guarantee of safety preservation.

\begin{theorem}[Conditional Safety Bound] 
\label{thm:safety_bound} 
Let $\theta_0$ be a safely-aligned model and $\theta^{(T)}$ be the 
model after $T$ steps of CAE training with constraint weight $\lambda$. 
Under Assumption~\ref{ass:regularity} (Conditions 1 and 2), if the anchor loss is bounded 
as $\mathcal{L}_{\text{anchor}}(\theta^{(T)}) \leq \epsilon$, then: 
\begin{equation} 
    \left| \mathbb{E}_{x \sim \mathcal{D}_{\text{harm}}} \left[ 
    \pi_{\theta^{(T)}}(y_{\text{refuse}} | x) - \pi_{\theta_0}(y_{\text{refuse}} | x) 
    \right] \right| \leq C \sqrt{\epsilon} 
\end{equation} 
where $C$ is a constant depending on $B$, $L_f$, and the circuit 
size $|\mathcal{S}|$. 
\end{theorem}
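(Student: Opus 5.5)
The plan is to chain three inequalities: Pinsker's inequality, a stability bound turning feature proximity into output-probability proximity, and an averaging step over $\mathcal{D}_{\text{harm}}$.

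\textbf{Step 1: from KL to distance on features.} Write $\Delta(x) := f_{\mathcal{S}}(x;\theta^{(T)}) - f_{\mathcal{S}}(x;\theta_0)$. By Condition~1, every coordinate lies in $[0,B]$ (ReLU features are nonnegative). After normalizing by $B$, each per-feature term $D_{KL}(f^{(l,k)}(x;\theta_0)\,\|\,f^{(l,k)}(x;\theta))$ can be read as a KL divergence between Bernoulli-type quantities $f/B$. Pinsker's inequality then gives
\begin{equation}
  \bigl|f^{(l,k)}(x;\theta^{(T)})-f^{(l,k)}(x;\theta_0)\bigr| \le B\sqrt{2\,D_{KL}^{(l,k)}(x)} .
\end{equation}
Summing over $\mathcal{S}$ and applying Cauchy--Schwarz yields
\begin{equation}
  \|\Delta(x)\|_1 \le B\sqrt{2|\mathcal{S}|\sum_{(l,k)\in\mathcal{S}} D_{KL}^{(l,k)}(x)} .
\end{equation}
Taking expectations and applying Jensen to the concave square root gives
\begin{equation}
  \mathbb{E}_x\|\Delta(x)\|_1 \le B\sqrt{2|\mathcal{S}|\,\epsilon}.
\end{equation}
This step needs the reference distribution $p(x)$ in $\mathcal{L}_{\text{anchor}}$ to cover $\mathcal{D}_{\text{harm}}$. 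I would state this explicitly, taking $p$ to be $\mathcal{D}_{\text{harm}}$ or a distribution dominating it with a bounded density ratio that is absorbed into $C$.

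\textbf{Step 2: from features to refusal probability.} This is the main obstacle. Condition~2 controls how $f_{\mathcal{S}}$ moves with $\theta$, but the theorem needs the reverse direction: that $\pi_\theta(y_{\text{refuse}}\mid x)$ moves little when $f_{\mathcal{S}}$ moves little. I would make this precise through the definition of $\mathcal{S}$. Decompose the refusal probability as a function $\Phi$ of the safety features plus a contribution from features outside the circuit:
\begin{equation}
  \pi_\theta(y_{\text{refuse}}\mid x) = \Phi\bigl(f_{\mathcal{S}}(x;\theta), f_{\mathcal{S}^c}(x;\theta)\bigr).
\end{equation}
The attribution-score construction (Definition~\ref{def:attribution}) says that, to first order, refusal is mediated by $\mathcal{S}$. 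I would formalize this as a causal-mediation hypothesis: the dependence of $\Phi$ on $f_{\mathcal{S}^c}$ along the training trajectory is negligible, or is folded into the conditional nature of the bound. Under that hypothesis, $\Phi$ is Lipschitz in $f_{\mathcal{S}}$ on the box $[0,B]^{|\mathcal{S}|}$. Its constant $L_\Phi$ is controlled by $B$ and by $L_f$, the latter entering when converting bounds stated in parameter space. Since $\pi\in[0,1]$ and probabilities are $1$-Lipschitz in logits up to a factor of $1/4$ in the binary case, bounded attribution gradients give
\begin{equation}
  |\pi_{\theta^{(T)}} - \pi_{\theta_0}| \le L_\Phi \|\Delta(x)\|_1 .
\end{equation}
If I cannot avoid the mediation hypothesis, I would state it as part of what makes the theorem "conditional". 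The alternative route, through the parameter displacement controlled by $L_f$ together with local invertibility of the feature map, seems weaker.

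\textbf{Step 3: combine.} Move the absolute value inside the expectation and apply Steps~1 and~2:
\begin{equation}
  \Bigl|\mathbb{E}_{x\sim\mathcal{D}_{\text{harm}}}\bigl[\pi_{\theta^{(T)}} - \pi_{\theta_0}\bigr]\Bigr| \le L_\Phi\, B\sqrt{2|\mathcal{S}|}\,\sqrt{\epsilon}.
\end{equation}
This gives the claim with $C = \sqrt{2|\mathcal{S}|}\,B\,L_\Phi$, which depends on $B$, $L_f$ (through $L_\Phi$) and $|\mathcal{S}|$ as stated.
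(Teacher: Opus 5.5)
Your proposal is sound under the hypotheses you state. The three-step skeleton is the paper's: Pinsker, then a Lipschitz bound of the refusal probability in $f_{\mathcal{S}}$, then averaging. Your Step~1, however, takes a different and sounder route.

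The paper applies Pinsker once, to the push-forward laws $P^{\mathcal{S}}_{\theta_0}$ and $P^{\mathcal{S}}_{\theta^{(T)}}$, and gets a total-variation bound of $\sqrt{\epsilon/2}$. In its Step~3 it then claims $\mathbb{E}_x\|\Delta(x)\|\le\sqrt{|\mathcal{S}|}\,B\,\|P^{\mathcal{S}}_{\theta_0}-P^{\mathcal{S}}_{\theta^{(T)}}\|_{\mathrm{TV}}$ ``by Cauchy--Schwarz.'' That inequality fails, because the law of the features cannot see same-input displacement. Take two equally likely inputs whose single circuit feature equals $0$ and $B$ at $\theta_0$, with the values swapped at $\theta^{(T)}$. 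The laws coincide (TV $=0$), yet $\mathbb{E}_x|\Delta(x)|=B$.

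Your per-input, per-coordinate Pinsker, followed by Cauchy--Schwarz and Jensen, bounds exactly the quantity $\mathbb{E}_x\|\Delta(x)\|_1$ that Step~3 needs. It can do so because you use the second, per-input line of \eqref{eq:anchor_loss}. The two lines there are not equivalent, and only the per-input one controls same-input displacement. The cost is twofold. You must commit to a meaning for a KL between scalars: your Bernoulli reading of $f/B$ works, and so does the Gaussian reading in Appendix~\ref{app:gradient_proof} with $\sigma$ in place of $B$. You also need $p$ to cover $\mathcal{D}_{\text{harm}}$, which the paper assumes without saying so. Separately, Bernoulli Pinsker actually gives $B\sqrt{D_{KL}^{(l,k)}(x)/2}$, so your constant is loose by a factor of $2$, which is harmless.

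Step~2 rests on the same ingredient in both arguments. The paper asserts $|\pi_\theta(y_{\text{refuse}}\mid x)-\pi_{\theta'}(y_{\text{refuse}}\mid x)|\le L_g\|f_{\mathcal{S}}(x;\theta)-f_{\mathcal{S}}(x;\theta')\|$ as if it followed from Condition~2 and the Lipschitzness of softmax, which it does not. You are right that this is an additional mediation hypothesis, so neither proof obtains the statement from Conditions~1--2 alone. Two fixes are needed:
\begin{itemize}
\item State the hypothesis exactly, either as the $L_g$-inequality itself or as exact insensitivity of the refusal probability to $f_{\mathcal{S}^c}$. A merely ``negligible'' dependence leaves an additive remainder rather than $C\sqrt{\epsilon}$ (cf.\ Theorem~\ref{thm:robustness}).
\item Drop the claim that $L_\Phi$ is controlled by $L_f$. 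Condition~2 bounds how far features move per unit of parameter motion and says nothing about $\partial\pi/\partial f_{\mathcal{S}}$.
\end{itemize}
In fact $L_f$ never enters your argument, nor the paper's, whose constant is $L_g\sqrt{|\mathcal{S}|}\,B/\sqrt{2}$. The honest constant depends on $L_\Phi$, $B$, $|\mathcal{S}|$ and the density ratio.
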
 

\begin{remark}[Separation of Theory and Empirical Faithfulness]
\label{rem:theory_vs_empirical}
We emphasize that Theorem~\ref{thm:safety_bound} is a \textit{conditional} statement: it bounds the change in refusal probability \textit{given} a bounded anchor loss. Crucially, the proof relies only on Pinsker's inequality and the Lipschitz property (Condition 2), completely relaxing the need for encoder differentiability (Condition 3). Furthermore, the assumption that the identified circuit genuinely mediates safety (i.e., circuit faithfulness) is not a theoretical premise, but an empirical property that we rigorously verify through causal intervention experiments (see Figure~\ref{fig:circuit_validity}). Full proof is deferred to Appendix~\ref{app:safety_proof}.
\end{remark}

\begin{corollary}
\label{cor:no_degradation}
If $\lambda$ is chosen such that $\mathcal{L}_{\text{anchor}}(\theta^{(t)}) 
\leq \epsilon$ for all $t \leq T$, then CAE prevents safety degradation 
(Definition~\ref{def:safety_degradation}) with $\delta = C\sqrt{\epsilon}$.
\end{corollary}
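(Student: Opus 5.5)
The statement to prove is Corollary~\ref{cor:no_degradation}, and I would derive it directly from Theorem~\ref{thm:safety_bound}. Fix any $t \le T$. The model $\theta^{(t)}$ is the output of $t$ steps of CAE training from the safely-aligned $\theta_0$ with the same weight $\lambda$. By hypothesis $\mathcal{L}_{\text{anchor}}(\theta^{(t)}) \le \epsilon$. Assumption~\ref{ass:regularity} (Conditions 1 and 2) holds for all $x,\theta$, so it holds along the whole trajectory. Applying Theorem~\ref{thm:safety_bound} with $T$ replaced by $t$ therefore gives
\begin{equation*}
\left| \mathbb{E}_{x \sim \mathcal{D}_{\text{harm}}}\left[\pi_{\theta^{(t)}}(y_{\text{refuse}}|x)\right] - \mathbb{E}_{x \sim \mathcal{D}_{\text{harm}}}\left[\pi_{\theta_0}(y_{\text{refuse}}|x)\right]\right| \le C\sqrt{\epsilon}.
\end{equation*}
Here I use linearity of expectation to split the expectation of the difference into a difference of expectations; both terms are in $[0,1]$, so this is legitimate. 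The one point to check is that $C$ depends only on $B$, $L_f$ and $|\mathcal{S}|$. These are fixed across all $t$ because $\mathcal{S}$ is computed once at $\theta_0$ and frozen, so a single constant $C$ works uniformly in $t$.

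Next, drop the absolute value on the favourable side:
\begin{equation*}
\mathbb{E}_{x \sim \mathcal{D}_{\text{harm}}}\left[\pi_{\theta^{(t)}}(y_{\text{refuse}}|x)\right] \ge \mathbb{E}_{x \sim \mathcal{D}_{\text{harm}}}\left[\pi_{\theta_0}(y_{\text{refuse}}|x)\right] - C\sqrt{\epsilon}.
\end{equation*}
This holds for every $t \le T$, so no $t \le T$ satisfies the strict inequality in Definition~\ref{def:safety_degradation} with $\delta = C\sqrt{\epsilon}$. In other words, safety degradation at level $\delta$ is ruled out over the whole horizon.

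The only real obstacle is interpretive. Definition~\ref{def:safety_degradation} asks for the existence of \emph{some} $T>0$ and \emph{some} $\delta$. Taken literally, this could never be excluded: any tiny drop would count as degradation for a small enough $\delta$. I would therefore state explicitly that ``prevents safety degradation with $\delta = C\sqrt{\epsilon}$'' means the degradation inequality fails for this particular $\delta$ at every step $t \le T$ covered by the hypothesis. With that reading the argument above is complete. I would also note in passing that the hypothesis is needed at every $t$, not just at $T$, for the conclusion to hold along the trajectory, which is why the corollary assumes the bound for all $t \le T$.
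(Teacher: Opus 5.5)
Your proposal is correct and follows the paper's (implicit) argument. The paper gives no separate proof and treats the corollary as an immediate consequence of Theorem~\ref{thm:safety_bound} applied at each step $t \le T$, which is what you do, including the uniformity of $C$ in $t$ and your two clarifications: that $\delta = C\sqrt{\epsilon}$ is fixed rather than ``for some $\delta>0$'' as in Definition~\ref{def:safety_degradation}, and that the conclusion covers only the horizon $t \le T$.
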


\subsection{Comparison with Explicit Safety Constraints}

We contrast our implicit circuit-based constraint with explicit 
behavioral constraints via reward models.

\begin{definition}[Reward-Based Safety Constraint]
\label{def:reward_constraint}
Given a safety reward model $R_{\text{safe}}: \mathcal{X} \times 
\mathcal{Y} \to \mathbb{R}$, the reward-based evolution objective is:
\begin{equation}
    \mathcal{L}_{\text{RM}}(\theta) = \mathcal{L}_{\text{evol}}(\theta) 
    + \lambda_{\text{RM}} \mathbb{E}_{x, y \sim \pi_\theta} \left[ 
    R_{\text{safe}}(x, y) \right]
\end{equation}
\end{definition}

\begin{proposition}[Comparison of Constraint Paradigms] 
\label{prop:comparison} 
Let $d_{\text{out}}$ be the vocabulary size. Comparing circuit-based and reward-based constraints reveals three key differences: \textbf{(1) Dimensionality:} The reward constraint operates on $O(d_{\text{out}})$ output dimensions per token, while the circuit constraint operates on $O(|\mathcal{S}|)$ feature dimensions, where $|\mathcal{S}| \ll d_{\text{out}}$. \textbf{(2) Specificity:} The reward constraint penalizes all outputs deemed unsafe regardless of internal mechanism, whereas the circuit constraint specifically preserves the features causally responsible for safe behavior. \textbf{(3) Computational cost:} The reward constraint requires an additional forward pass through $R_{\text{safe}}$ for each generated sequence, while the circuit constraint requires only activation extraction through frozen transcoders, with a minimal cost of $O(|\mathcal{S}| \cdot d)$.
\end{proposition}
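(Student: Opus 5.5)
The statement to prove is Proposition~\ref{prop:comparison}. It is a structural comparison rather than an inequality, so the plan is to establish each of the three claims separately. Each one follows from the definitions of the two objectives: $\mathcal{L}_{\text{anchor}}$ in Definition~\ref{def:cao}, with its gradient from Proposition~\ref{prop:gradient}, and $\mathcal{L}_{\text{RM}}$ in Definition~\ref{def:reward_constraint}. I would first fix the counting model. Per token and per input $x$, I count the number of scalar quantities the constraint term depends on, and the number of arithmetic operations needed to evaluate it beyond the forward pass of $\mathcal{M}_\theta$ that evolution already performs.

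For \textbf{(1) Dimensionality}, I would note that $\mathbb{E}_{x,y\sim\pi_\theta}[R_{\text{safe}}(x,y)]$ depends on $\theta$ only through the sampling distribution $\pi_\theta(\cdot\mid x,y_{<t})$. At each token this is a point in the simplex over $d_{\text{out}}$ symbols, so the score-function gradient involves $\nabla_\theta\log\pi_\theta$ over $O(d_{\text{out}})$ logits. By contrast, Eq.~\eqref{eq:anchor_loss} depends on $\theta$ only through $f_{\mathcal{S}}(x;\theta)\in\mathbb{R}^{|\mathcal{S}|}$, and Proposition~\ref{prop:gradient} makes this explicit as a sum of exactly $|\mathcal{S}|$ terms. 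The inequality $|\mathcal{S}|\ll d_{\text{out}}$ then has to be supplied as an empirical premise: the circuit is under $2\%$ of $LK$ features, while $d_{\text{out}}$ is typically around $10^5$. I would state it as such rather than derive it.

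For \textbf{(2) Specificity}, I would argue through the level sets of each penalty. $R_{\text{safe}}$ is a function of $(x,y)$ alone, so any two parameter settings that induce the same output distribution receive the same penalty, whatever their internal computation. This is invariance to mechanism. The anchor loss is zero exactly when $f_{\mathcal{S}}(x;\theta)=f_{\mathcal{S}}(x;\theta_0)$ almost surely. It is otherwise insensitive to features outside $\mathcal{S}$, since those coordinates do not appear in Eq.~\eqref{eq:anchor_loss}. By Definition~\ref{def:safety_circuit}, $\mathcal{S}$ consists of features with attribution at least $\gamma$ toward $y_{\text{refuse}}$. The constraint therefore targets precisely the causally attributed features, and it leaves the complement free.

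For \textbf{(3) Computational cost}, the reward constraint needs $R_{\text{safe}}$ evaluated on every sampled $(x,y)$, which costs one full forward pass of the reward network per sequence. For the circuit constraint, the hidden states $h_l$ are already computed during the policy forward pass. Evaluating $f^{(l,k)}=E_l(h_l)_k$ for $(l,k)\in\mathcal{S}$ needs only the $|\mathcal{S}|$ relevant encoder rows, each an inner product in $\mathbb{R}^d$ followed by a nonlinearity, for $O(|\mathcal{S}|d)$ in total. The per-feature KL terms add $O(|\mathcal{S}|)$, and the cached references from Algorithm~\ref{alg:cae} add no further model passes. Backpropagation through the frozen encoder rows has the same order by Condition~3 of Assumption~\ref{ass:regularity}. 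The main obstacle is that this count requires a row-sliced encoder: evaluating the full $E_l$ would cost $O(Kd)$ per layer. I would therefore state explicitly that only the rows indexed by $\mathcal{S}$ are computed, which is legitimate because the features in Definition~\ref{def:transcoder} are computed coordinatewise. The backward pass through the main model, which both methods share, would be excluded from the comparison.
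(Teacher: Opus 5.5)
The paper gives no proof of this proposition. It is stated as a qualitative comparison, and neither the body nor the appendix argues for it, so your proposal has to be judged on its own. Your treatments of (1) and (2) are sensible formalizations; the level-set argument is the right way to make ``regardless of internal mechanism'' precise. Part (3), however, has a genuine gap.

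\textbf{The gap in (3).} Your key premise, that $h_l$ is already computed during the policy forward pass, contradicts the paper's own algorithm. In Algorithm~\ref{alg:cae} the anchor is evaluated on $M$ inputs $x_m$ drawn from a reference distribution whose $\theta_0$-activations are cached. These are not the rollouts used for $g_{\text{evol}}$, so $h_l(x_m;\theta)$ is not a by-product of the evolution step.
\begin{itemize}
\item Obtaining $f_{\mathcal{S}}(x_m;\theta)$ costs a forward pass of the current policy on each $x_m$, up to the deepest layer touched by $\mathcal{S}$.
\item $g_{\text{anchor}}=\frac{1}{M}\sum_m\nabla_\theta\ell_m$ costs a backward pass through that prefix on those same inputs. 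This pass is distinct from the one for $g_{\text{evol}}$, so you cannot exclude it as ``shared by both methods.''
\item When $R_{\text{safe}}$ is evaluated on the evolution rollouts, its score-function gradient folds into the existing rollout pass, and $R_{\text{safe}}$ is only run forward. If instead it is evaluated on separate harmful prompts, it needs its own generation and backward pass. Either way, policy passes must be counted on both sides, not cancelled.
\end{itemize}
You also cannot rescue the premise by anchoring on the evolution batch itself. The prompts are self-generated (by the proposer in Abs-ZERO), so $f_{\mathcal{S}}(x;\theta_0)$ cannot be cached. You would then need a forward pass through a frozen copy of $\theta_0$, which is the same kind of extra model pass you charge to $R_{\text{safe}}$.

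What your argument actually shows is that the \emph{transcoder read-out}, given $h_l$, costs $O(|\mathcal{S}|d)$. Part (3) should either be restricted to that statement, or the circuit side should also be charged $M$ policy forward/backward passes per step.

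\textbf{Row-slicing.} This step does not follow from Definition~\ref{def:transcoder}, which allows any $E_l:\mathbb{R}^d\to\mathbb{R}^K$. A TopK encoder, for example, needs all $K$ pre-activations to determine a single coordinate. You need the elementwise ReLU form invoked in Remark~\ref{rem:relu_justification}.

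\textbf{Two smaller points.}
\begin{itemize}
\item In (1), your heuristic does not support $|\mathcal{S}|\ll d_{\text{out}}$. The bound $|\mathcal{S}|<0.02\,LK$ implies this only when $LK\ll 50\,d_{\text{out}}$. With per-layer dictionaries of order $10^5$ across tens of layers, $1$--$2\%$ of $LK$ is of the same order as a $10^5$-token vocabulary. Keep it as a bare premise.
\item In (2), the claim that the anchor loss vanishes exactly when $f_{\mathcal{S}}(x;\theta)=f_{\mathcal{S}}(x;\theta_0)$ almost surely depends on how the scalar KL in Eq.~\eqref{eq:anchor_loss} is read.
  \begin{itemize}
  \item It holds under the appendix's equal-variance Gaussian reading.
  \item The distribution-level first line vanishes under mere equality in law.
  \item A softmax over a layer's full feature dimension is shift-invariant, and it places features outside $\mathcal{S}$ in the normalizer. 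That would break your claim that those coordinates do not appear.
  \end{itemize}
  Fix a per-coordinate divergence that vanishes only at equality, and both parts of (2) go through.
\end{itemize}
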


\section{Experiments}

\subsection{Settings}
\begin{wraptable}{r}{0.48\textwidth} 
  \vspace{-20pt} 
  \centering
  \caption{Strict separation of datasets for circuit tracing and causal verification.} 
  \label{tab:datasets} 
  \resizebox{\linewidth}{!}{
  \begin{tabular}{llc} 
    \toprule
    \textbf{Phase} & \textbf{Dataset} & \textbf{Num.} \\ 
    \midrule
    Circuit Tracing & Do-Not-Answer \cite{Do-Not-Answer} & 939 \\ 
    Circuit Tracing & DAN \cite{DAN} & 390 \\ 
    Circuit Tracing & Malicious Instruct \cite{MaliciousInstruct} & 100 \\ 
    \midrule
    Causal Verification & AdvBench \cite{zou2023universal} & 520 \\ 
    \bottomrule
  \end{tabular} 
  }
  \vspace{-10pt} 
\end{wraptable}
We conduct experiments on three instruction-tuned model families with available pre-trained transcoders: 
Qwen3-4B-Instruct, Gemma-2-2B-IT, and Llama-3.2-1B-Instruct. 
We choose instruction-tuned models as our starting point because they represent the realistic deployment setting where capability enhancement is most valuable, and they possess identifiable 
safety circuits that can be anchored during evolution.
We evaluate on two representative self-evolution algorithms: EVOL-RL~\citep{evol-rl}, a GRPO-based algorithm that improves mathematical reasoning through self-generated problems, and Abs-ZERO~\citep{abs0}, a joint proposer-solver framework achieving self-improvement with zero external data. For both algorithms, we use the original training and evaluation configurations and datasets as described in their respective papers, evolving each model for 500 steps with batch size 32.
We report the average accuracy across benchmarks as the overall capability score.
\begin{figure*}[ht]
  \vspace{-20pt}
  \begin{center}
    \centerline{\includegraphics[width=1\textwidth]{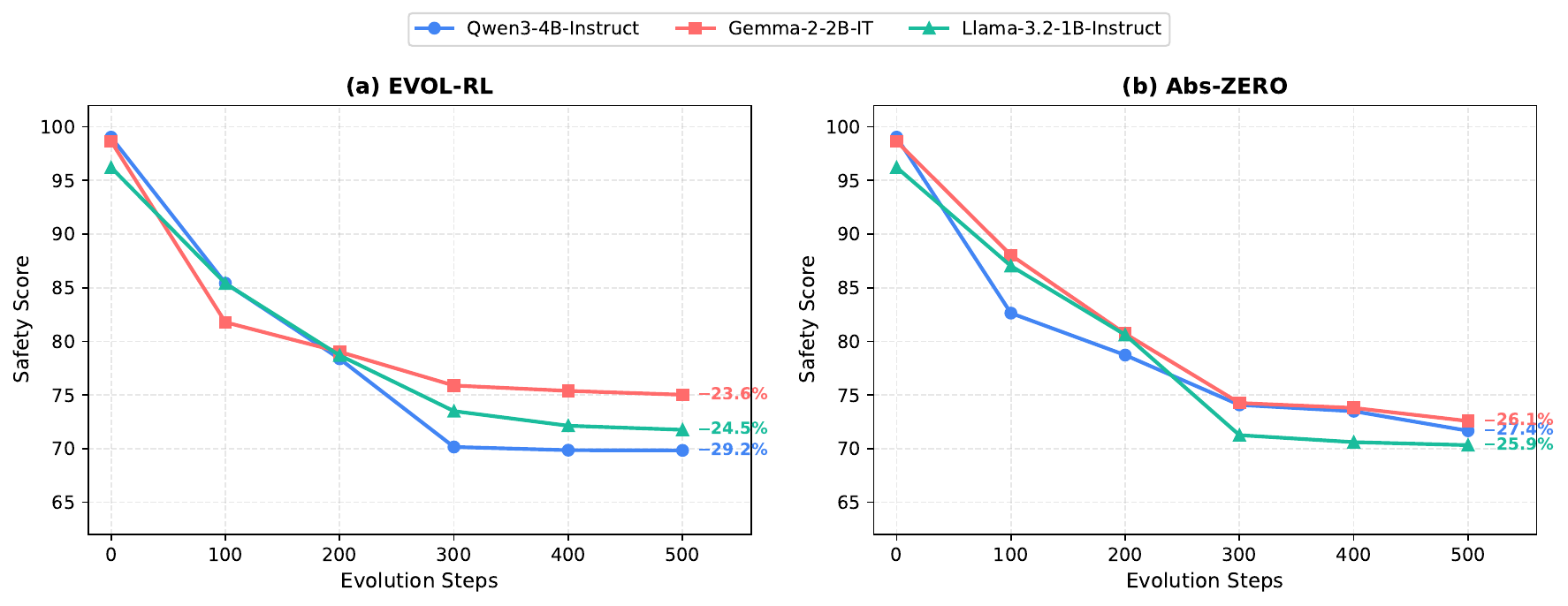}}
    \vspace{-10pt} 
    \caption{
      Safety degradation during unconstrained self-evolution. 
We track the refusal rate across $500$ evolution steps for three 
instruction-tuned models. All models exhibit severe safety 
collapse. This consistent pattern across model families demonstrates that safety 
degradation is an inherent risk of unconstrained evolution, 
not an artifact of specific architectures.
    }
    \label{fig:safety_decay}
  \end{center}
  \vspace{-20pt} 
\end{figure*}
For safety circuit location and evaluation, we adopt the framework from \cite{li2024safety}. To ensure the identified circuits capture universal safety mechanisms rather than dataset-specific artifacts, we employ a \textbf{multi-dataset intersection} pipeline with strict separation between tracing and testing phases (Table~\ref{tab:datasets}). For circuit tracing, we use a diverse corpus of 1,429 prompts spanning Do-Not-Answer \cite{Do-Not-Answer}, DAN \cite{DAN}, and Malicious Instruct \cite{MaliciousInstruct}. By tracing circuits across these diverse formats and taking their intersection, we effectively decouple the circuit from domain-specific vocabulary or syntactic patterns. Crucially, for causal verification and safety evaluation during evolution, we use an entirely held-out dataset, AdvBench \cite{zou2023universal} (520 queries). The fact that our identified circuit maintains strong causal effects on this unseen dataset strongly validates our intersection methodology. We also measure over-refusal rate using 1,000 benign queries from the Alpaca dataset, where a lower rate indicates better usability.

\subsection{Safety Degradation in Unconstrained Evolution}

Before introducing our method, we first quantify the severity of 
safety degradation in current self-evolution algorithms.
Figure~\ref{fig:safety_decay} tracks the refusal rate of three 
instruction-tuned models during $500$ steps of unconstrained 
evolution under two algorithms: EVOL-RL and 
Abs-ZERO. The pattern is striking and consistent across all 
settings. Models begin with near-perfect refusal rates ($96$--$99\%$) 
but experience rapid degradation within the first $200$ steps, 
losing approximately $15$--$20$ percentage points. By the end of 
evolution, refusal rates stabilize around $70$--$75\%$, representing 
a degradation of roughly $25$ percentage points.
While the final refusal rates remain above the $50\%$ threshold, 
this degradation is still concerning for two reasons. First, a 
model that refuses only $70\%$ of harmful requests will comply 
with nearly one-third of them, substantially increasing risk in 
deployment. Second, and more importantly, this degradation occurs 
after only $500$ evolution steps. Extended evolution, which is 
common in practice for maximizing capability gains, would likely 
lead to further safety collapse.
The consistency of this pattern across three model families (Qwen, Gemma, Llama) and two evolution algorithms indicates that safety degradation is an inherent risk of unconstrained 
self-evolution, not an artifact of specific architectures or 
training procedures. Notably, the degradation curves show similar 
trajectories regardless of the starting refusal rate or model 
size, suggesting a fundamental tension between capability 
optimization and safety preservation.
These findings motivate our central question: \textit{Can we 
enable capability evolution while preventing safety degradation?}

\subsection{Causal Verification of Safety Circuit}

To verify that the identified circuit is indeed causally responsible for safety behavior, we perform activation intervention experiments on the held-out AdvBench dataset. Specifically, we scale the activations of safety circuit features by factors ranging from 0 (complete suppression) to 10 (strong amplification) and measure the resulting refusal rate on these unseen harmful prompts. Figure~\ref{fig:circuit_validity} shows the results across three models. For Llama-3.2-2B-Instruct,  intervening on the safety circuit produces clear causal effects: suppressing 
activations (scale $<$ 1.0) dramatically reduces refusal rate, 
from 96.2\% at natural activation to 48.7\% when completely 
zeroed out. This nearly 50 percentage point drop demonstrates 
that the safety circuit is necessary for refusal behavior. 
Amplifying activations (scale $>$ 1.0) produces modest further 
increases, reaching 99.4\% at scale=10, suggesting the circuit 
is already operating near saturation under natural conditions.
As a control, we perform the same intervention on randomly 
selected features of equal size. The random circuit shows no 
systematic response to scaling: refusal rate fluctuates within 
a narrow band ($\pm$8\%) around the mean, with no consistent 
trend as scale increases. This confirms that the observed 
causal effect is specific to the identified safety circuit, 
not a general artifact of feature intervention.
The consistent pattern across Llama, Qwen, and Gemma confirms that the identified safety circuits causally mediate safety behavior.
These findings validate our circuit identification methodology 
and justify using the safety circuit as an anchor during 
evolution. The circuit is both \textit{necessary} (suppression 
causes safety collapse) and \textit{specific} (random features 
have no effect), analogous to how Hox genes are both essential 
for body plan and distinct from peripheral genes.

\begin{figure*}[ht]
  \vspace{-15pt}
  \begin{center}
    \centerline{\includegraphics[width=\textwidth]{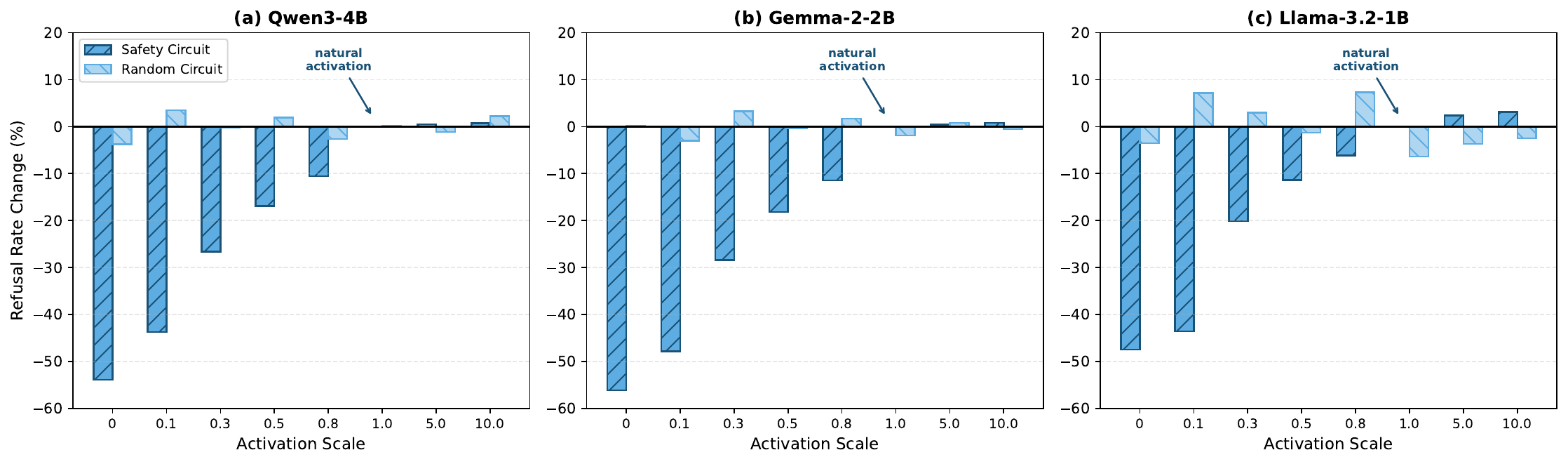}}
    \vspace{-10pt}
    \caption{
Causal verification of the safety circuit across three model families. We intervene on 
feature activations by scaling them from 0 to 10$\times$ and 
measure the resulting change in refusal rate. \textbf{Safety Circuit} (blue): scaling down suppresses refusal behavior dramatically, while scaling up slightly amplifies it. \textbf{Random Circuit} (light blue): scaling 
has no systematic effect, with refusal rate fluctuating within 
$\pm$8\% around the mean, confirming that the effect is specific 
to the identified safety circuit rather than a general property 
of feature intervention.
    }
    \label{fig:circuit_validity}
  \end{center}
  \vspace{-25pt}
\end{figure*}

\subsection{Safety Preservation During Evolution}

Having established that the identified safety circuit causally 
mediates refusal behavior, we now examine whether anchoring this 
circuit can preserve safety during self-evolution.
We compare three approaches: (1) unconstrained evolution without 
any safety mechanism, (2) explicit constraint using Beaver-Cost~\citep{dai2023} 
as a safety reward model that penalizes unsafe outputs during 
evolution, and (3) our implicit constraint (CAE) with KL divergence 
on safety circuits. For CAE, we set constraint weight $\lambda = 0.1$ 
and use $M = 64$ reference samples for computing the circuit KL 
loss. Safety circuits are extracted once before evolution using the circuit-tracer toolkit, comprising 1.2\%--1.8\% of total transcoder features across different models.
Figure~\ref{fig:pareto} traces the safety-capability trajectories of three models under different constraint strategies. Each trajectory begins at the original aligned model and progresses through evolution.
Without any constraint, all models follow a troubling pattern: capability improves steadily while safety collapses. For the EVOL-RL evolution algorithm, the safety score of Qwen3-4B-Instruct drops from $99.04$\% to $69.85$\%; that of Gemma-2-2B-IT drops from $98.65$\% to $75.38$\%; 
Llama-3.2-1B-Instruct from $96.22$\% to $72.13$\%.
The same phenomenon is also observed for the Abs-ZERO evolution algorithm.
The models become stronger reasoners but lose their ability to refuse harmful requests.
Adding an explicit safety reward model partially arrests this decline. 
Although the decline in safety performance has diminished, the increase in capabilities during the evolutionary process has also diminished. 
The reward signal, while protective, appears to interfere with task learning.
Circuit anchoring produces a qualitatively different trajectory. 
Safety remains above $95$\% throughout evolution, while capability gains match those of unconstrained evolution. The models evolve freely in capability-relevant dimensions while remaining anchored in safety-critical ones. This pattern holds across all three model families and both evolution algorithms. Furthermore, to verify the scalability and durability of our approach, we conduct extended stress-test experiments on larger models (Gemma-2-9B-IT) and significantly longer evolution horizons (up to 5,000 steps). 
As detailed in Appendix~\ref{app:scalability}, 
\begin{wrapfigure}{r}{0.45\textwidth}
\vspace{-5pt}
\centering
\includegraphics[width=\linewidth]{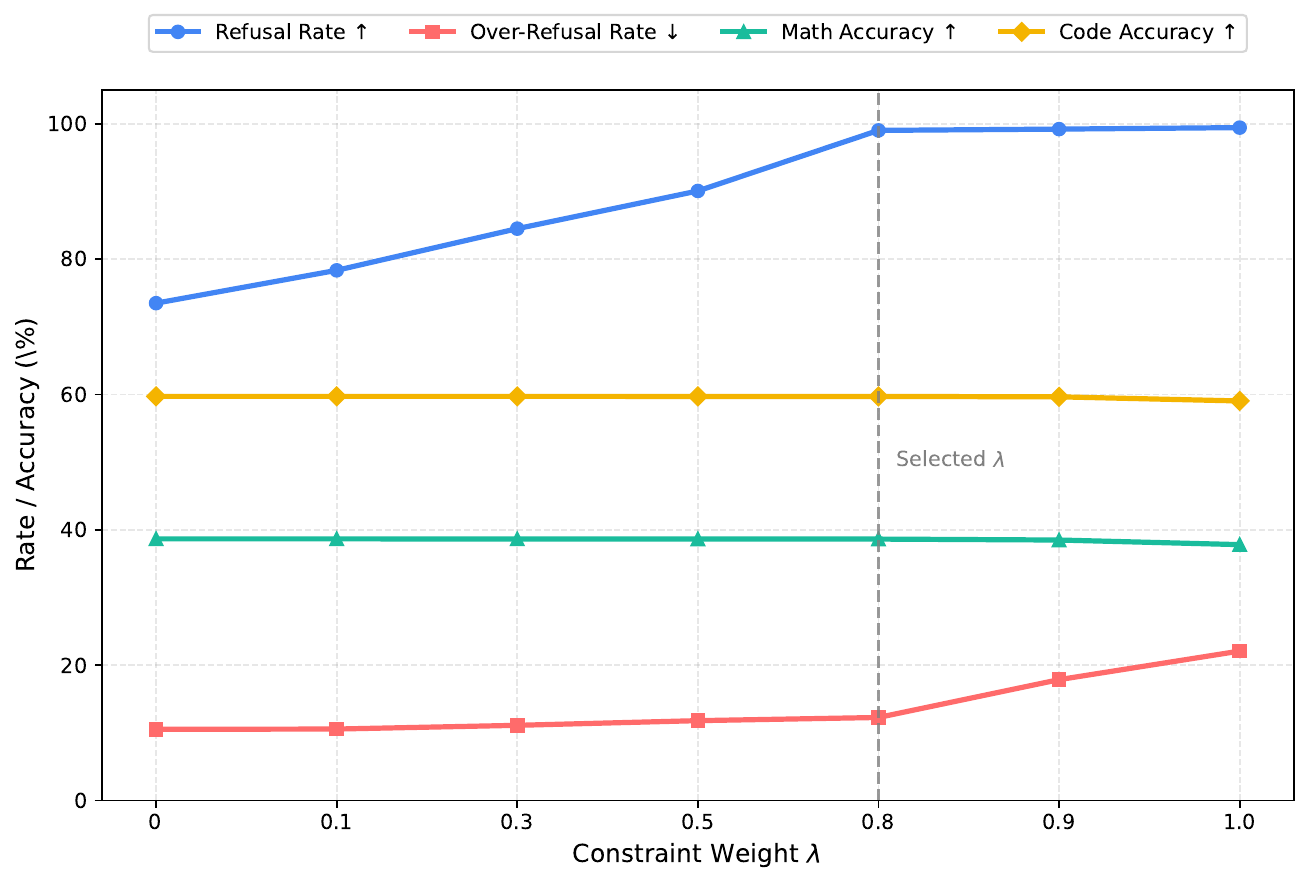}
\vspace{-15pt}
\caption{Effect of constraint weight $\lambda$ of Llama-3.2-1B-Instruct with EVOL-RL. Small $\lambda$ provides insufficient safety, while excessively large values cause capability degradation and increased over-refusal.}
\label{fig:ablation_lambda}
\vspace{-40pt}
\end{wrapfigure}CAE consistently maintains high safety ($>95\%$) without compromising capability gains, demonstrating its robust protection even under extreme evolutionary pressure.

\begin{figure*}[t]
\vspace{-20pt}
\centering
\includegraphics[width=1\textwidth]{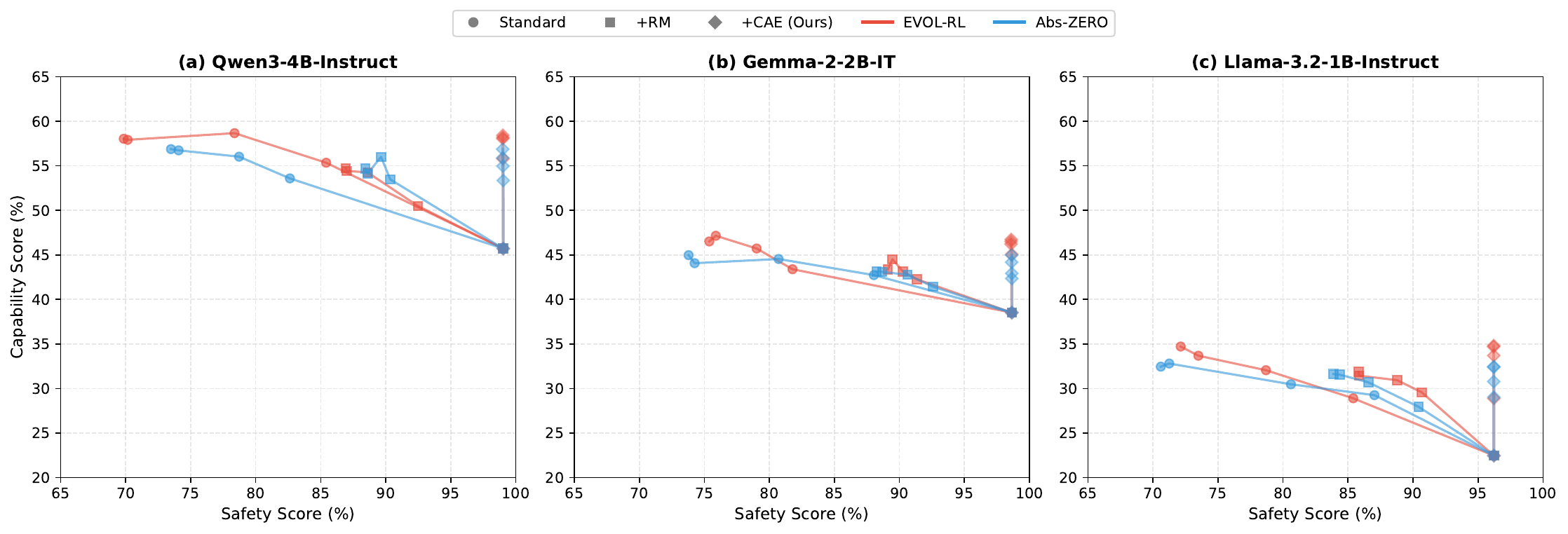}
\vspace{-15pt}
\caption{Safety-capability trajectories during evolution across 
three model families. 
Standard evolution improves 
capability but severely degrades safety. +RM partially preserves 
safety at the cost of reduced capability gains. +CAE maintains 
high safety while achieving comparable capability improvement to unconstrained evolution. The CAE trajectories 
cluster tightly in the upper-right region across both algorithms, 
demonstrating consistent safety preservation regardless of the 
underlying evolution method.}
\label{fig:pareto}
\vspace{-10pt}
\end{figure*}

\subsection{Effect of Constraint Weight}
The hyperparameter $\lambda$ controls the strength of the circuit 
anchoring constraint.
Figure~\ref{fig:ablation_lambda} shows the effect of varying $\lambda$ across safety and capability metrics. When $\lambda = 0$ (no constraint), the model achieves high 
capability scores but suffers severe safety degradation. As $\lambda$ increases, refusal rate improves 
substantially, reaching around $90$\% at $\lambda = 0.5$.
However, excessively large $\lambda$ values introduce two negative 
effects. First, capability scores begin to decline (although slightly). Second, over-refusal rate begins to rise sharply, indicating the model 
becomes overly conservative and refuses benign requests.
We use $0.8$ as the constraint weight for the safety circuit in evolution, achieving a high refusal rate ($>95$\%) while maintaining capability and keeping over-refusal below $18$\%.

\section{Related Work}
Self-evolution enhances model capabilities without extensive human supervision via iterative policy optimization \cite{evol-rl} or abstraction-based reasoning \cite{abs0}. However, these methods uniformly neglect safety preservation. Given mounting evidence that safety alignment is surprisingly brittle \cite{ji2025languagemodelsresistalignment,qi2024safetyalignmentjusttokens} and easily compromised by benign fine-tuning \citep{qi2023finetuningalignedlanguagemodels, rebuttal2}, this neglect poses severe risks. Recent analyses reveal that such safety collapse is closely tied to the similarity between alignment and fine-tuning data \cite{rebuttal3}. While some explicit constraint methods have been proposed to restore safety during fine-tuning \cite{rebuttal1}, they rely on external supervision and are difficult to scale. This motivates our search for structural, rather than purely behavioral, safety preservation in the self-evolution setting.
Our approach builds on mechanistic interpretability \cite{smart2025beyond}, specifically the circuits framework \cite{tigges2024llmcircuitanalysesconsistent} and transcoders \cite{transcoder}, which decompose activations into interpretable features. Recent work localizes refusal behaviors to specific directions \cite{arditi2024refusal} and traces information flow \cite{birardi2025automated}. We leverage the circuit-tracer toolkit \citep{circuit-tracer} to identify safety circuits and integrate them as optimization constraints.
Biologically, our approach is inspired by evolutionary developmental biology, where master regulatory genes (e.g., Hox genes) remain highly conserved across millions of years \cite{carroll2005endless,mcginnis1994century}. This conservation arises from purifying selection \cite{stern2000perspective}, enforcing developmental constraints \cite{maynard1985developmental} that forbid fatal evolutionary paths. Coupled with modularity \cite{wagner2007road,felix2015pervasive}, this allows robustness and evolvability to coexist. We propose aligned language models exhibit analogous modularity, where safety circuits must remain conserved while capability components evolve. Finally, unlike standard constrained optimization (e.g., PPO \citep{ppo} or elastic weight consolidation \cite{li2020few}), our approach operates in the disentangled feature space, enabling surgical preservation of safety without impeding capability.

\section{Conclusion}
Current self-evolution algorithms optimize purely for capability, leading to misevolution where models gain reasoning but progressively lose safety. Mirroring biological evolution, where conserved Hox genes anchor essential functions while peripheral genes adapt freely, we propose Circuit-Anchored Evolution (CAE). By identifying a minimal safety circuit via mechanistic interpretability and anchoring its activations within a small displacement bound, CAE acts as an artificial purifying selection. It prevents safety-disrupting updates while allowing free capability adaptation. Experiments across diverse models and algorithms demonstrate that CAE achieves superior safety preservation, matches unconstrained capability gains, and incurs significantly lower overhead than reward-based alternatives. Ultimately, CAE enables AI systems to grow stronger without growing dangerous, achieving the evolutionary balance that allows life to flourish.

{\small
\bibliographystyle{plainnat}
\bibliography{refs}

@misc{misevolve,
      title={Your Agent May Misevolve: Emergent Risks in Self-evolving LLM Agents}, 
      author={Shuai Shao and Qihan Ren and Chen Qian and Boyi Wei and Dadi Guo and Jingyi Yang and Xinhao Song and Linfeng Zhang and Weinan Zhang and Dongrui Liu and Jing Shao},
      year={2025},
      eprint={2509.26354},
      archivePrefix={arXiv},
      primaryClass={cs.AI},
      url={https://arxiv.org/abs/2509.26354}, 
}

@misc{evol-rl,
      title={Evolving Language Models without Labels: Majority Drives Selection, Novelty Promotes Variation}, 
      author={Yujun Zhou and Zhenwen Liang and Haolin Liu and Wenhao Yu and Kishan Panaganti and Linfeng Song and Dian Yu and Xiangliang Zhang and Haitao Mi and Dong Yu},
      year={2025},
      eprint={2509.15194},
      archivePrefix={arXiv},
      primaryClass={cs.LG},
      url={https://arxiv.org/abs/2509.15194}, 
}

@misc{abs0,
      title={Absolute Zero: Reinforced Self-play Reasoning with Zero Data}, 
      author={Andrew Zhao and Yiran Wu and Yang Yue and Tong Wu and Quentin Xu and Yang Yue and Matthieu Lin and Shenzhi Wang and Qingyun Wu and Zilong Zheng and Gao Huang},
      year={2025},
      eprint={2505.03335},
      archivePrefix={arXiv},
      primaryClass={cs.LG},
      url={https://arxiv.org/abs/2505.03335}, 
}

@misc{dr0,
      title={Dr. Zero: Self-Evolving Search Agents without Training Data}, 
      author={Zhenrui Yue and Kartikeya Upasani and Xianjun Yang and Suyu Ge and Shaoliang Nie and Yuning Mao and Zhe Liu and Dong Wang},
      year={2026},
      eprint={2601.07055},
      archivePrefix={arXiv},
      primaryClass={cs.AI},
      url={https://arxiv.org/abs/2601.07055}, 
}

@misc{self-evol1,
      title={UI-Genie: A Self-Improving Approach for Iteratively Boosting MLLM-based Mobile GUI Agents}, 
      author={Han Xiao and Guozhi Wang and Yuxiang Chai and Zimu Lu and Weifeng Lin and Hao He and Lue Fan and Liuyang Bian and Rui Hu and Liang Liu and Shuai Ren and Yafei Wen and Xiaoxin Chen and Aojun Zhou and Hongsheng Li},
      year={2025},
      eprint={2505.21496},
      archivePrefix={arXiv},
      primaryClass={cs.CL},
      url={https://arxiv.org/abs/2505.21496}, 
}

@article{miconi2008evolution,
  title={Evolution and complexity: The double-edged sword},
  author={Miconi, Thomas},
  journal={Artificial life},
  volume={14},
  number={3},
  pages={325--344},
  year={2008},
  publisher={MIT Press One Rogers Street, Cambridge, MA 02142-1209, USA journals-info~…}
}

@article{hanley2011double,
  title={The double-edged sword: How evolution can make or break a live-attenuated virus vaccine},
  author={Hanley, Kathryn A},
  journal={Evolution: Education and Outreach},
  volume={4},
  number={4},
  pages={635--643},
  year={2011},
  publisher={Springer}
}

@article{perc2015double,
  title={A double-edged sword: Benefits and pitfalls of heterogeneous punishment in evolutionary inspection games},
  author={Perc, Matja{\v{z}} and Szolnoki, Attila},
  journal={Scientific reports},
  volume={5},
  number={1},
  pages={11027},
  year={2015},
  publisher={Nature Publishing Group UK London}
}

@article{emery2005evolution,
  title={Evolution of the avian brain and intelligence},
  author={Emery, Nathan J and Clayton, Nicola S},
  journal={Current Biology},
  volume={15},
  number={23},
  pages={R946--R950},
  year={2005},
  publisher={Elsevier}
}

@article{williams2023eagle,
  title={Eagle eyed or bird brained?},
  author={Williams, David},
  journal={Eye},
  volume={37},
  number={12},
  pages={2426--2430},
  year={2023},
  publisher={Nature Publishing Group UK London}
}

@article{alberch1989logic,
  title={The logic of monsters: evidence for internal constraint in development and evolution},
  author={Alberch, Pere},
  journal={Geobios},
  volume={22},
  pages={21--57},
  year={1989},
  publisher={Elsevier}
}

@article{mallo2010hox,
  title={Hox genes and regional patterning of the vertebrate body plan},
  author={Mallo, Moises and Wellik, Deneen M and Deschamps, Jacqueline},
  journal={Developmental biology},
  volume={344},
  number={1},
  pages={7--15},
  year={2010},
  publisher={Elsevier}
}

@article{hughes2002hox,
  title={Hox genes and the evolution of the arthropod body plan 1},
  author={Hughes, Cynthia L and Kaufman, Thomas C},
  journal={Evolution \& development},
  volume={4},
  number={6},
  pages={459--499},
  year={2002},
  publisher={Wiley Online Library}
}

@article{pearson2005modulating,
  title={Modulating Hox gene functions during animal body patterning},
  author={Pearson, Joseph C and Lemons, Derek and McGinnis, William},
  journal={Nature Reviews Genetics},
  volume={6},
  number={12},
  pages={893--904},
  year={2005},
  publisher={Nature Publishing Group UK London}
}

@article{goodman2001human,
  title={Human HOX gene mutations},
  author={Goodman, FR and Scambler, PJ},
  journal={Clinical genetics},
  volume={59},
  number={1},
  pages={1--11},
  year={2001},
  publisher={Wiley Online Library}
}

@article{brunet2021role,
  title={The role of purifying selection in the origin and maintenance of complex function},
  author={Brunet, Tyler DP and Doolittle, W Ford and Bielawski, Joseph P},
  journal={Studies in History and Philosophy of Science Part A},
  volume={87},
  pages={125--135},
  year={2021},
  publisher={Elsevier}
}

@article{sharov2014evolutionary,
  title={Evolutionary constraints or opportunities?},
  author={Sharov, Alexei A},
  journal={Biosystems},
  volume={123},
  pages={9--18},
  year={2014},
  publisher={Elsevier}
}

@misc{tao2024surveyselfevolutionlargelanguage,
      title={A Survey on Self-Evolution of Large Language Models}, 
      author={Zhengwei Tao and Ting-En Lin and Xiancai Chen and Hangyu Li and Yuchuan Wu and Yongbin Li and Zhi Jin and Fei Huang and Dacheng Tao and Jingren Zhou},
      year={2024},
      eprint={2404.14387},
      archivePrefix={arXiv},
      primaryClass={cs.CL},
      url={https://arxiv.org/abs/2404.14387}, 
}

@article{gao2025survey,
  title={A survey of self-evolving agents: On path to artificial super intelligence},
  author={Gao, Huan-ang and Geng, Jiayi and Hua, Wenyue and Hu, Mengkang and Juan, Xinzhe and Liu, Hongzhang and Liu, Shilong and Qiu, Jiahao and Qi, Xuan and Wu, Yiran and others},
  journal={arXiv preprint arXiv:2507.21046},
  year={2025}
}

@article{feng2025group,
  title={Group-in-group policy optimization for llm agent training},
  author={Feng, Lang and Xue, Zhenghai and Liu, Tingcong and An, Bo},
  journal={arXiv preprint arXiv:2505.10978},
  year={2025}
}

@article{novikov2025alphaevolve,
  title={AlphaEvolve: A coding agent for scientific and algorithmic discovery},
  author={Novikov, Alexander and V{\~u}, Ng{\^a}n and Eisenberger, Marvin and Dupont, Emilien and Huang, Po-Sen and Wagner, Adam Zsolt and Shirobokov, Sergey and Kozlovskii, Borislav and Ruiz, Francisco JR and Mehrabian, Abbas and others},
  journal={arXiv preprint arXiv:2506.13131},
  year={2025}
}

@article{transcoder,
  title={Transcoders find interpretable llm feature circuits},
  author={Dunefsky, Jacob and Chlenski, Philippe and Nanda, Neel},
  journal={Advances in Neural Information Processing Systems},
  volume={37},
  pages={24375--24410},
  year={2024}
}

@misc{ji2025languagemodelsresistalignment,
      title={Language Models Resist Alignment: Evidence From Data Compression}, 
      author={Jiaming Ji and Kaile Wang and Tianyi Qiu and Boyuan Chen and Jiayi Zhou and Changye Li and Hantao Lou and Juntao Dai and Yunhuai Liu and Yaodong Yang},
      year={2025},
      eprint={2406.06144},
      archivePrefix={arXiv},
      primaryClass={cs.CL},
      url={https://arxiv.org/abs/2406.06144}, 
}

@misc{qi2024safetyalignmentjusttokens,
      title={Safety Alignment Should Be Made More Than Just a Few Tokens Deep}, 
      author={Xiangyu Qi and Ashwinee Panda and Kaifeng Lyu and Xiao Ma and Subhrajit Roy and Ahmad Beirami and Prateek Mittal and Peter Henderson},
      year={2024},
      eprint={2406.05946},
      archivePrefix={arXiv},
      primaryClass={cs.CR},
      url={https://arxiv.org/abs/2406.05946}, 
}

@misc{qi2023finetuningalignedlanguagemodels,
      title={Fine-tuning Aligned Language Models Compromises Safety, Even When Users Do Not Intend To!}, 
      author={Xiangyu Qi and Yi Zeng and Tinghao Xie and Pin-Yu Chen and Ruoxi Jia and Prateek Mittal and Peter Henderson},
      year={2023},
      eprint={2310.03693},
      archivePrefix={arXiv},
      primaryClass={cs.CL},
      url={https://arxiv.org/abs/2310.03693}, 
}

@article{smart2025beyond,
  title={Beyond model interpretability: Socio-structural explanations in machine learning},
  author={Smart, Andrew and Kasirzadeh, Atoosa},
  journal={AI \& SOCIETY},
  volume={40},
  number={4},
  pages={2045--2053},
  year={2025},
  publisher={Springer}
}

@misc{circuit-tracer,
  author = {Hanna, Michael and Piotrowski, Mateusz and Lindsey, Jack and Ameisen, Emmanuel},
  title = {circuit-tracer},
  howpublished = {\url{https://github.com/safety-research/circuit-tracer}},
  note = {The first two authors contributed equally and are listed alphabetically.},
  year = {2025}
}

@article{arditi2024refusal,
  title={Refusal in language models is mediated by a single direction},
  author={Arditi, Andy and Obeso, Oscar and Syed, Aaquib and Paleka, Daniel and Panickssery, Nina and Gurnee, Wes and Nanda, Neel},
  journal={Advances in Neural Information Processing Systems},
  volume={37},
  pages={136037--136083},
  year={2024}
}

@article{birardi2025automated,
  title={Automated Circuit Interpretation via Probe Prompting},
  author={Birardi, Giuseppe},
  journal={arXiv preprint arXiv:2511.07002},
  year={2025}
}

@misc{ppo,
      title={Proximal Policy Optimization Algorithms}, 
      author={John Schulman and Filip Wolski and Prafulla Dhariwal and Alec Radford and Oleg Klimov},
      year={2017},
      eprint={1707.06347},
      archivePrefix={arXiv},
      primaryClass={cs.LG},
      url={https://arxiv.org/abs/1707.06347}, 
}

@article{li2020few,
  title={Few-shot image generation with elastic weight consolidation},
  author={Li, Yijun and Zhang, Richard and Lu, Jingwan and Shechtman, Eli},
  journal={arXiv preprint arXiv:2012.02780},
  year={2020}
}

@book{carroll2005endless,
  title={Endless forms most beautiful: The new science of evo devo and the making of the animal kingdom},
  author={Carroll, Sean B},
  number={54},
  year={2005},
  publisher={WW Norton \& Company}
}

@article{mcginnis1994century,
  title={A century of homeosis, a decade of homeoboxes},
  author={McGinnis, William},
  journal={Genetics},
  volume={137},
  number={3},
  pages={607},
  year={1994}
}

@article{stern2000perspective,
  title={Perspective: evolutionary developmental biology and the problem of variation},
  author={Stern, David L},
  journal={Evolution},
  volume={54},
  number={4},
  pages={1079--1091},
  year={2000},
  publisher={Wiley Online Library}
}

@article{maynard1985developmental,
  title={Developmental constraints and evolution},
  author={Maynard Smith, John and Burian, Richard and Kauffman, Stuart and Alberch, Pere and Campbell, John and Goodwin, Brian and Lande, Russell and Raup, David and Wolpert, Lewis},
  journal={Quarterly Review of Biology},
  volume={60},
  number={3},
  pages={265--287},
  year={1985}
}

@article{wagner2007road,
  title={The road to modularity},
  author={Wagner, G{\"u}nter P and Pavlicev, Mihaela and Cheverud, James M},
  journal={Nature Reviews Genetics},
  volume={8},
  number={12},
  pages={921--931},
  year={2007},
  publisher={Nature Publishing Group UK London}
}

@article{felix2015pervasive,
  title={Pervasive robustness in biological systems},
  author={F{\'e}lix, Marie-Anne and Barkoulas, Michalis},
  journal={Nature Reviews Genetics},
  volume={16},
  number={8},
  pages={483--496},
  year={2015},
  publisher={Nature Publishing Group UK London}
}

@article{shao2024deepseekmath,
  title={Deepseekmath: Pushing the limits of mathematical reasoning in open language models},
  author={Shao, Zhihong and Wang, Peiyi and Zhu, Qihao and Xu, Runxin and Song, Junxiao and Bi, Xiao and Zhang, Haowei and Zhang, Mingchuan and Li, YK and Wu, Yang and others},
  journal={arXiv preprint arXiv:2402.03300},
  year={2024}
}

@inproceedings{hanna2025circuit,
  title={Circuit-tracer: A new library for finding feature circuits},
  author={Hanna, Michael and Piotrowski, Mateusz and Lindsey, Jack and Ameisen, Emmanuel},
  booktitle={Proceedings of the 8th BlackboxNLP Workshop: Analyzing and Interpreting Neural Networks for NLP},
  pages={239--249},
  year={2025}
}

@misc{dai2023,
      title={Safe RLHF: Safe Reinforcement Learning from Human Feedback}, 
      author={Josef Dai and Xuehai Pan and Ruiyang Sun and Jiaming Ji and Xinbo Xu and Mickel Liu and Yizhou Wang and Yaodong Yang},
      year={2023},
      eprint={2310.12773},
      archivePrefix={arXiv},
      primaryClass={cs.AI},
      url={https://arxiv.org/abs/2310.12773}, 
}

@article{zou2023universal,
  title={Universal and transferable adversarial attacks on aligned language models},
  author={Zou, Andy and Wang, Zifan and Carlini, Nicholas and Nasr, Milad and Kolter, J Zico and Fredrikson, Matt},
  journal={arXiv preprint arXiv:2307.15043},
  year={2023}
}

@article{li2024safety,
  title={Safety layers in aligned large language models: The key to llm security},
  author={Li, Shen and Yao, Liuyi and Zhang, Lan and Li, Yaliang},
  journal={arXiv preprint arXiv:2408.17003},
  year={2024}
}

@misc{tigges2024llmcircuitanalysesconsistent,
      title={LLM Circuit Analyses Are Consistent Across Training and Scale}, 
      author={Curt Tigges and Michael Hanna and Qinan Yu and Stella Biderman},
      year={2024},
      eprint={2407.10827},
      archivePrefix={arXiv},
      primaryClass={cs.LG},
      url={https://arxiv.org/abs/2407.10827}, 
}

@misc{Do-Not-Answer,
      title={Do-Not-Answer: A Dataset for Evaluating Safeguards in LLMs}, 
      author={Yuxia Wang and Haonan Li and Xudong Han and Preslav Nakov and Timothy Baldwin},
      year={2023},
      eprint={2308.13387},
      archivePrefix={arXiv},
      primaryClass={cs.CL},
      url={https://arxiv.org/abs/2308.13387}, 
}

@misc{DAN,
      title={"Do Anything Now": Characterizing and Evaluating In-The-Wild Jailbreak Prompts on Large Language Models}, 
      author={Xinyue Shen and Zeyuan Chen and Michael Backes and Yun Shen and Yang Zhang},
      year={2024},
      eprint={2308.03825},
      archivePrefix={arXiv},
      primaryClass={cs.CR},
      url={https://arxiv.org/abs/2308.03825}, 
}

@misc{MaliciousInstruct,
      title={Catastrophic Jailbreak of Open-source LLMs via Exploiting Generation}, 
      author={Yangsibo Huang and Samyak Gupta and Mengzhou Xia and Kai Li and Danqi Chen},
      year={2023},
      eprint={2310.06987},
      archivePrefix={arXiv},
      primaryClass={cs.CL},
      url={https://arxiv.org/abs/2310.06987}, 
}

@misc{rebuttal1,
      title={Shape it Up! Restoring LLM Safety during Finetuning}, 
      author={ShengYun Peng and Pin-Yu Chen and Jianfeng Chi and Seongmin Lee and Duen Horng Chau},
      year={2025},
      eprint={2505.17196},
      archivePrefix={arXiv},
      primaryClass={cs.LG},
      url={https://arxiv.org/abs/2505.17196}, 
}

@misc{rebuttal2,
      title={Fine-Tuning Lowers Safety and Disrupts Evaluation Consistency}, 
      author={Kathleen C. Fraser and Hillary Dawkins and Isar Nejadgholi and Svetlana Kiritchenko},
      year={2025},
      eprint={2506.17209},
      archivePrefix={arXiv},
      primaryClass={cs.CL},
      url={https://arxiv.org/abs/2506.17209}, 
}

@misc{rebuttal3,
      title={Why LLM Safety Guardrails Collapse After Fine-tuning: A Similarity Analysis Between Alignment and Fine-tuning Datasets}, 
      author={Lei Hsiung and Tianyu Pang and Yung-Chen Tang and Linyue Song and Tsung-Yi Ho and Pin-Yu Chen and Yaoqing Yang},
      year={2025},
      eprint={2506.05346},
      archivePrefix={arXiv},
      primaryClass={cs.CR},
      url={https://arxiv.org/abs/2506.05346}, 
}
}


\appendix

\section{Notation Summary}
\label{app:notation}

For convenience, we summarize the notation used throughout the paper in Table~\ref{tab:notation}.

\begin{table}[h]
\centering
\caption{Summary of notation.}
\label{tab:notation}
\begin{tabular}{cl}
\toprule
\textbf{Symbol} & \textbf{Description} \\
\midrule
$\mathcal{M}_\theta$ & Language model with parameters $\theta$ \\
$L$ & Number of layers in the model \\
$d$ & Hidden dimension \\
$K$ & Number of transcoder features per layer \\
$h_l(x; \theta)$ & Hidden state at layer $l$ for input $x$ \\
$\mathcal{T}_l = (E_l, D_l)$ & Transcoder (encoder-decoder) for layer $l$ \\
$f_l(x; \theta)$ & Feature activation vector at layer $l$ \\
$f^{(l,k)}(x; \theta)$ & Activation of feature $k$ at layer $l$ \\
$\mathcal{S}$ & Safety circuit (set of feature indices) \\
$\mathcal{C}$ & Capability circuit (set of feature indices) \\
$f_{\mathcal{S}}(x; \theta)$ & Safety feature activations \\
$\alpha^{(l,k)}_y$ & Attribution score of feature $(l,k)$ to output $y$ \\
$\pi_\theta(y\vert{}x)$ & Output probability distribution \\
$P_\theta^{\mathcal{S}}$ & Distribution of safety circuit activations \\
$\mathcal{L}_{\text{evol}}$ & Evolution objective (GRPO or Abs-ZERO) \\
$\mathcal{L}_{\text{circuit}}$ & Circuit-level KL constraint \\
$\mathcal{L}_{\text{CAE}}$ & Circuit-Anchored Evolution objective \\
$\lambda$ & Constraint weight hyperparameter \\
$\theta_0$ & Initial aligned model parameters \\
$\theta^{(t)}$ & Model parameters at iteration $t$ \\
\bottomrule
\end{tabular}
\end{table}

\section{Proof of Proposition 4.6 (Gradient of Circuit Loss)}
\label{app:gradient_proof}

\begin{proposition*}[Restated]
Under Assumption~\ref{ass:regularity}, the gradient of the circuit loss admits the form:
\begin{equation}
    \nabla_\theta \mathcal{L}_{\text{circuit}}(\theta) = \mathbb{E}_{x \sim p(x)} \left[ \sum_{(l,k) \in \mathcal{S}} \left( 1 + \log \frac{f^{(l,k)}(x; \theta)}{f^{(l,k)}(x; \theta_0)} \right) \nabla_\theta f^{(l,k)}(x; \theta) \right]
\end{equation}
\end{proposition*}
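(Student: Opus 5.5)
The plan is to reduce the statement to a pointwise chain-rule computation and then justify moving the gradient inside the expectation. First I will fix the scalar function that the per-feature term $D_{KL}\bigl(f^{(l,k)}(x;\theta_0)\,\|\,f^{(l,k)}(x;\theta)\bigr)$ in Definition~\ref{def:cao} stands for. Each activation is a nonnegative scalar, so I write the term as $\phi\bigl(f^{(l,k)}(x;\theta), f^{(l,k)}(x;\theta_0)\bigr)$ for a fixed function $\phi(u,v)$ on $u,v>0$.

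To obtain exactly the stated weight, $\phi$ must satisfy $\partial_u\phi(u,v) = 1+\log(u/v)$. This holds for $\phi(u,v)=u\log(u/v)$, possibly plus a term depending only on $v$. This is the integrand of the divergence with the \emph{current} activation in the first slot. I note in passing that the literal argument order $p\log(p/q)$ with $p=f(\theta_0)$, $q=f(\theta)$ would instead give weight $-f(\theta_0)/f(\theta)$. Part of the write-up will therefore state this convention explicitly, since it is the only reading under which the displayed formula is correct.

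Next, for fixed $x$ and feature $(l,k)$, I compose $\theta\mapsto h_l(x;\theta)\mapsto E_l(h_l)_k = f^{(l,k)}(x;\theta)$. The transcoder weights are frozen, so the only $\theta$-dependence is through $h_l$. Hence $\nabla_\theta f^{(l,k)} = J_{h_l}(\theta)^\top \nabla_h E_l(h_l)_k$, and this exists almost everywhere by Condition~3 of Assumption~\ref{ass:regularity} together with Remark~\ref{rem:relu_justification}. Applying the scalar chain rule to $\phi(\cdot, f^{(l,k)}(x;\theta_0))$ gives the integrand $\bigl(1+\log\frac{f^{(l,k)}(x;\theta)}{f^{(l,k)}(x;\theta_0)}\bigr)\nabla_\theta f^{(l,k)}(x;\theta)$. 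Because the frozen reference $f^{(l,k)}(x;\theta_0)$ is constant in $\theta$, no gradient flows through it. Summing over the finite set $\mathcal{S}$ is linear.

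The main obstacle is interchanging $\nabla_\theta$ with $\mathbb{E}_{x\sim p(x)}$ and handling the logarithm. ReLU features can be exactly zero, and then $\log(u/v)$ is undefined. I would first restrict attention to the active set where both activations are positive; off it, the ReLU derivative vanishes, so $\nabla_\theta f^{(l,k)}=0$ a.e. and that region contributes nothing under the convention $0\cdot\log 0=0$. On the active set I would use dominated convergence, applied to difference quotients or via the mean value theorem on a neighborhood of $\theta$. The domination would come from $|f|\le B$ (Condition~1), the Jacobian bound $M$ (Condition~3), and a bound on $\|J_{h_l}\|$, for which I would borrow the Lipschitz constant $L_f$ (Condition~2). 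The logarithmic factor is unbounded as activations approach zero, so I expect to need either a lower bound $f\ge\varepsilon>0$ on the active set or the integrability of $|\log f|$ under $p(x)$. I would state whichever is needed as a mild additional hypothesis, or as a smoothing with $\log(f+\varepsilon)$ as used in practice, and then conclude.
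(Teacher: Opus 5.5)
Your proposal is correct under the convention you make explicit, and it takes a genuinely different route from the paper.

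\textbf{How the paper argues.} It expands the per-feature term in the literal forward order, $\sum_i p_i\log(p_i/q_i)$ with $p=f^{(l,k)}(x;\theta_0)$ fixed. It correctly obtains $-\sum_i (p_i/q_i)\nabla_\theta q_i$, which is exactly the weight $-f^{(l,k)}(x;\theta_0)/f^{(l,k)}(x;\theta)$ you predict for that order. It then computes a Gaussian variant with weight $(f^{(l,k)}(x;\theta)-f^{(l,k)}(x;\theta_0))/\sigma^2$. It reaches the displayed factor $1+\log(q/p)$ only by invoking a ``score function identity for exponential families'':
$\nabla_\theta D_{KL}(p\|q)=\mathbb{E}_p[\nabla_\theta\log q]=(1+\log\frac{q}{p})\nabla_\theta q$.
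This is not a valid identity. The first equality has the wrong sign. In the second, $\mathbb{E}_p[\nabla_\theta\log q]=(p/q)\nabla_\theta q$, which is not $(1+\log\frac{q}{p})\nabla_\theta q$.

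\textbf{How your route differs.} The two arguments share only the chain rule through the frozen encoder and the fact that the cached reference carries no gradient. Your characterization of every scalar integrand with $\partial_u\phi=1+\log(u/v)$ as $u\log(u/v)+c(v)$ is what actually makes the displayed formula true. You also supply the gradient--expectation interchange and a treatment of the logarithmic singularity, both of which the paper omits. The price is that your result concerns the reverse-order divergence, not $\mathcal{L}_{\text{anchor}}$ as written in Definition~\ref{def:cao} and Algorithm~\ref{alg:cae}. Your write-up should say plainly that for the forward order, every concrete reading the paper offers (scalar integrand, softmax, Gaussian) gives a different weight. So the statement as literally posed does not hold.

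\textbf{Details to repair.}
\begin{itemize}
\item It is not true that the ReLU derivative vanishes off the set where both activations are positive. On $\{f^{(l,k)}(x;\theta_0)=0<f^{(l,k)}(x;\theta)\}$ the feature is active at $\theta$, so $\nabla_\theta f^{(l,k)}\neq 0$ in general, and $u\log(u/v)=+\infty$ there. The loss itself is therefore infinite unless this set is $p$-null. You need that as a hypothesis, or an $\varepsilon$-regularization in \emph{both} arguments (which, note, changes the weight away from exactly $1+\log(u/v)$). A lower bound on the current activation alone is not enough.
\item For domination, use Condition~2 directly as $\|\nabla_\theta f^{(l,k)}(x;\theta)\|\le L_f$ wherever the gradient exists. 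It does not bound $\|J_{h_l}\|$, and you do not need it to.
\item Prompts are discrete, so Condition~3's almost-everywhere clause does not give differentiability at a prescribed $\theta$ for every prompt. The clean conclusion is a statement for Lebesgue-almost every $\theta$, via Rademacher's theorem, once your positivity hypothesis makes the integrands locally Lipschitz.
\end{itemize}
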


\begin{proof}
We begin by expanding the circuit loss. By Definition~\ref{def:cao}:
\begin{equation}
    \mathcal{L}_{\text{circuit}}(\theta) = \mathbb{E}_{x \sim p(x)} \left[ \sum_{(l,k) \in \mathcal{S}} D_{KL}\left( f^{(l,k)}(x; \theta_0) \| f^{(l,k)}(x; \theta) \right) \right]
\end{equation}

For notational simplicity, we consider a single feature $(l,k)$ and drop the expectation temporarily. Let $p := f^{(l,k)}(x; \theta_0)$ (reference, fixed) and $q := f^{(l,k)}(x; \theta)$ (current, variable).

Since we treat feature activations as parameters of distributions (specifically, we use a softmax normalization over the feature dimension to obtain valid probability distributions), the KL divergence is:
\begin{equation}
    D_{KL}(p \| q) = \sum_i p_i \log \frac{p_i}{q_i} = \sum_i p_i \log p_i - \sum_i p_i \log q_i
\end{equation}

The first term is constant with respect to $\theta$. Taking the gradient of the second term:
\begin{align}
    \nabla_\theta D_{KL}(p \| q) &= -\nabla_\theta \sum_i p_i \log q_i \\
    &= -\sum_i p_i \cdot \frac{1}{q_i} \cdot \nabla_\theta q_i \\
    &= -\sum_i \frac{p_i}{q_i} \nabla_\theta q_i
\end{align}

Now, we need to compute $\nabla_\theta q_i = \nabla_\theta f^{(l,k)}(x; \theta)$. By the chain rule:
\begin{equation}
    \nabla_\theta f^{(l,k)}(x; \theta) = \nabla_\theta E_l(h_l(x; \theta))_k = J_{E_l}^{(k)} \cdot \nabla_\theta h_l(x; \theta)
\end{equation}
where $J_{E_l}^{(k)}$ is the $k$-th row of the Jacobian of $E_l$, which is fixed since the transcoder is frozen.

For the case where we treat activations directly (without softmax normalization), we use a Gaussian approximation. Assuming $f^{(l,k)} \sim \mathcal{N}(\mu, \sigma^2)$, the KL divergence between two Gaussians with the same variance is:
\begin{equation}
    D_{KL}(\mathcal{N}(\mu_0, \sigma^2) \| \mathcal{N}(\mu, \sigma^2)) = \frac{(\mu - \mu_0)^2}{2\sigma^2}
\end{equation}

Taking the gradient:
\begin{equation}
    \nabla_\theta D_{KL} = \frac{\mu - \mu_0}{\sigma^2} \nabla_\theta \mu = \frac{f^{(l,k)}(x; \theta) - f^{(l,k)}(x; \theta_0)}{\sigma^2} \nabla_\theta f^{(l,k)}(x; \theta)
\end{equation}

More generally, using the score function identity for exponential families, we can write:
\begin{equation}
    \nabla_\theta D_{KL}(p \| q) = \mathbb{E}_{p}\left[ \nabla_\theta \log q \right] = \left(1 + \log \frac{q}{p}\right) \nabla_\theta q
\end{equation}

Summing over all features in $\mathcal{S}$ and taking expectations completes the proof.
\end{proof}

\section{Proof of Theorem~\ref{thm:safety_bound} (Safety Bound)}
\label{app:safety_proof}

\begin{theorem*}[Restated] 
Let $\theta_0$ be an aligned model and $\theta^{(T)}$ be the model after $T$ steps of CAE training. Under Assumption~\ref{ass:regularity} (Conditions 1 and 2), if $\mathcal{L}_{\text{anchor}}(\theta^{(T)}) \leq \epsilon$, then: 
\begin{equation} 
    \left| \mathbb{E}_{x \sim \mathcal{D}_{\text{harm}}} \left[ \pi_{\theta^{(T)}}(y_{\text{refuse}} | x) - \pi_{\theta_0}(y_{\text{refuse}} | x) \right] \right| \leq C \sqrt{\epsilon} 
\end{equation} 
where $C$ depends on $B$, $L_f$, and $\vert{}\mathcal{S}\vert{}$. Note that this proof does not require the transcoder to be differentiable (Condition 3).
\end{theorem*}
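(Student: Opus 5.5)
The plan is to pass from the anchor loss to a displacement of the safety features via Pinsker's inequality, and then from feature displacement to refusal probability via a Lipschitz readout. I will take the reference distribution $p(x)$ in Definition~\ref{def:cao} to be $\mathcal{D}_{\text{harm}}$. If it is not, I will pay a density-ratio factor $\sup_x d\mathcal{D}_{\text{harm}}/dp$, which gets absorbed into $C$.

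\textbf{Step 1: per-feature KL to total-variation displacement.} Fix $x$ and a feature $(l,k)\in\mathcal{S}$. As in Appendix~\ref{app:gradient_proof}, $f^{(l,k)}(x;\theta_0)$ and $f^{(l,k)}(x;\theta)$ are read as (normalized) distributions. Pinsker's inequality gives $\|f^{(l,k)}(x;\theta)-f^{(l,k)}(x;\theta_0)\|_1 \le \sqrt{2\,D_{KL}(f^{(l,k)}(x;\theta_0)\|f^{(l,k)}(x;\theta))}$. Summing over $\mathcal{S}$ and applying Cauchy--Schwarz twice (over $\mathcal{S}$, then Jensen over $x$) yields
\begin{equation*}
\mathbb{E}_{x}\bigl[\|f_{\mathcal{S}}(x;\theta^{(T)})-f_{\mathcal{S}}(x;\theta_0)\|_1\bigr]\le \sqrt{2|\mathcal{S}|\,\mathcal{L}_{\text{anchor}}(\theta^{(T)})}\le\sqrt{2|\mathcal{S}|\epsilon}.
\end{equation*}
The bound $B$ from Condition 1 is used to convert between the normalized distributions and the raw activations. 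The normalization map is Lipschitz with a constant depending on $B$ on the box $[-B,B]^{|\mathcal{S}|}$, so the raw displacement is controlled by the same quantity up to a factor $c(B)$.

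\textbf{Step 2: feature displacement to refusal displacement.} Here I need refusal to be a Lipschitz function of the circuit. I would write $\pi_\theta(y_{\text{refuse}}\mid x)=g_x(f_{\mathcal{S}}(x;\theta))$ for a readout $g_x$ that is Lipschitz on $[-B,B]^{|\mathcal{S}|}$. The Lipschitz constant would be built from $L_f$ together with the fact that $\pi\in[0,1]$, so the readout is bounded by $1$ and has a constant $c'(B,L_f)$. Then $\bigl|\mathbb{E}_x[\pi_{\theta^{(T)}}-\pi_{\theta_0}]\bigr|\le c'\,\mathbb{E}_x\|\Delta f_{\mathcal{S}}\|_1$. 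Combining with Step 1 gives the claim with $C=c(B)\,c'(B,L_f)\sqrt{2|\mathcal{S}|}$. An alternative route is to apply Pinsker once to $P^{\mathcal{S}}_{\theta}$ as a whole. Since $\pi\in[0,1]$ is a bounded test function of $f_{\mathcal{S}}$, the difference of expectations is at most $2\,\mathrm{TV}\le\sqrt{2\epsilon}$. I would present this cleaner argument if the readout formulation is adopted.

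\textbf{Main obstacle.} The hard step is Step 2, the reduction of $\pi_\theta(y_{\text{refuse}}\mid x)$ to a function of $f_{\mathcal{S}}$ alone. Condition 2 only controls $\theta\mapsto f_{\mathcal{S}}$, and nothing in Assumption~\ref{ass:regularity} prevents parameters outside the circuit from changing refusal. My first attempt would be to make the dependence explicit: decompose $\pi_\theta$ into a circuit-mediated part plus a residual. The residual would be bounded by the attribution threshold $\gamma$ of Definition~\ref{def:safety_circuit} together with the transcoder error $\epsilon_l$. If that fails, I would state the circuit-mediation property, which Remark~\ref{rem:theory_vs_empirical} treats as empirically verified, as an explicit hypothesis on the readout. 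Under that hypothesis the Pinsker argument above goes through.
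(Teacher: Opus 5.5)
\textbf{The gap is in Step 2}, which you flag yourself. It cannot be closed from Conditions 1 and 2 at all. Perturb only parameters that act after every transcoder read-out, such as the final normalization gain or an untied unembedding. Every $h_l(x;\theta)$, and hence every feature, is unchanged. So $\mathcal{L}_{\text{anchor}}=0$ under any reading of the KL, and its gradient vanishes in these directions, so CAE leaves them free. Conditions 1 and 2 still hold, yet $\pi_\theta(y_{\text{refuse}}\mid x)$ moves; with $\epsilon=0$ the claimed bound fails.

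Circuit mediation therefore has to enter as a hypothesis: $\pi_\theta(y_{\text{refuse}}\mid x)=g(f_{\mathcal S}(x;\theta))$ exactly, for all relevant $\theta$, with $g$ being $L_g$-Lipschitz. The constant $C$ then depends on $L_g$ rather than $L_f$.

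Your claim that $c'$ can be built from $L_f$ and $\pi\in[0,1]$ does not work. $L_f$ controls the map $\theta\mapsto f_{\mathcal S}$, not $f_{\mathcal S}\mapsto\pi$, and boundedness gives no Lipschitz constant. Your first repair also fails. A residual that does not shrink with $\epsilon$ yields $C\sqrt{\epsilon}+r$, not $C\sqrt{\epsilon}$. Moreover, $\gamma$ only lower-bounds attribution inside $\mathcal S$. It says nothing about out-of-circuit features, which may carry large negative attribution. Attribution is also a first-order quantity at $\theta_0$, and attention and residual pathways are not decomposed at all. In the counterexample above, the residual is the whole effect. Your fallback, stating mediation explicitly, is the right fix.

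The paper's proof does not supply the missing idea either. It writes $\pi_\theta\approx g(f_{\mathcal S})$ and derives $L_g$ from $L_f$ plus the Lipschitzness of softmax, which is the same inversion you make. Its final constant $L_gB\sqrt{|\mathcal S|/2}$ contains no $L_f$.

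Two further points concern Step 1 and how it joins Step 2.

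\textbf{The normalized-to-raw conversion runs backwards.} Lipschitzness of the normalization $N$ gives $\|N(a)-N(b)\|\le c\|a-b\|$, but you need $\|a-b\|\le c\|N(a)-N(b)\|$. For the softmax of Appendix~\ref{app:gradient_proof} no such bound exists, since a common shift of the activations is invisible to the anchor loss. Use a reading that is an isometry up to scale, e.g.\ $f^{(l,k)}\mapsto\mathrm{Bernoulli}(f^{(l,k)}/B)$ for ReLU activations in $[0,B]$. Then $|a-b|=B\,\mathrm{TV}\le B\sqrt{D_{KL}/2}$. Your Cauchy--Schwarz and Jensen chain then gives $\mathbb{E}_x\|\Delta f_{\mathcal S}\|_1\le B\sqrt{|\mathcal S|\epsilon/2}$. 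Under the mediation hypothesis this yields $C=L_gB\sqrt{|\mathcal S|/2}$, exactly the paper's constant.

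\textbf{With that repair, your per-$x$ use of Pinsker is better than the paper's route.} The paper applies Pinsker to the pushforward $P^{\mathcal S}_\theta$, then bounds $\mathbb{E}_x\|\Delta f_{\mathcal S}\|$ by $B\sqrt{|\mathcal S|}$ times the total variation. That step is invalid. Reassigning which $x$ receives which activation vector leaves $P^{\mathcal S}_\theta$ unchanged, so the TV is zero while the displacement is not.

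Your alternative route is the legitimate way to use the pushforward TV, but it has two requirements. It needs a single $x$-independent $g$ (not $g_x$), and it needs $p=\mathcal D_{\text{harm}}$. The density-ratio device rescues only the coupled route, because the weight $d\mathcal D_{\text{harm}}/dp$ depends on $x$, not on $f_{\mathcal S}$. Finally, your two routes use the two different lines of Eq.~\ref{eq:anchor_loss}, which are not equal in general, so the write-up should fix one reading.
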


\begin{proof}
The proof proceeds in three steps.

\textbf{Step 1: From KL to Total Variation.}

By Pinsker's inequality, for any two distributions $P$ and $Q$:
\begin{equation}
    \|P - Q\|_{\text{TV}} \leq \sqrt{\frac{1}{2} D_{KL}(P \| Q)}
\end{equation}

Applying this to the circuit activation distributions:
\begin{equation}
    \|P_{\theta_0}^{\mathcal{S}} - P_{\theta^{(T)}}^{\mathcal{S}}\|_{\text{TV}} \leq \sqrt{\frac{1}{2} D_{KL}(P_{\theta_0}^{\mathcal{S}} \| P_{\theta^{(T)}}^{\mathcal{S}})} \leq \sqrt{\frac{\epsilon}{2}}
\end{equation}

\textbf{Step 2: Lipschitz Property of Refusal Probability.}

We now establish that the refusal probability is Lipschitz in the safety circuit activations. By Definition~\ref{def:safety_circuit}, the safety circuit $\mathcal{S}$ consists of features with high attribution scores for the refusal output. This means:
\begin{equation}
    \pi_\theta(y_{\text{refuse}} | x) \approx g(f_{\mathcal{S}}(x; \theta))
\end{equation}
for some function $g$ that depends primarily on the safety features.

By Assumption~\ref{ass:regularity}(2), the feature map is $L_f$-Lipschitz. Furthermore, the output probability is a softmax function, which is Lipschitz with constant at most 1 in its inputs. Therefore, there exists a constant $L_g$ such that:
\begin{equation}
    |\pi_\theta(y_{\text{refuse}} | x) - \pi_{\theta'}(y_{\text{refuse}} | x)| \leq L_g \|f_{\mathcal{S}}(x; \theta) - f_{\mathcal{S}}(x; \theta')\|
\end{equation}

\textbf{Step 3: Combining the Bounds.}

By the definition of total variation distance:
\begin{align}
    &\left| \mathbb{E}_{x} \left[ \pi_{\theta^{(T)}}(y_{\text{refuse}} | x) \right] - \mathbb{E}_{x} \left[ \pi_{\theta_0}(y_{\text{refuse}} | x) \right] \right| \\
    &\leq \mathbb{E}_{x} \left[ \left| \pi_{\theta^{(T)}}(y_{\text{refuse}} | x) - \pi_{\theta_0}(y_{\text{refuse}} | x) \right| \right] \\
    &\leq L_g \mathbb{E}_{x} \left[ \|f_{\mathcal{S}}(x; \theta^{(T)}) - f_{\mathcal{S}}(x; \theta_0)\| \right] \\
    &\leq L_g \sqrt{|\mathcal{S}|} \cdot \|P_{\theta_0}^{\mathcal{S}} - P_{\theta^{(T)}}^{\mathcal{S}}\|_{\text{TV}} \cdot B \\
    &\leq L_g \sqrt{|\mathcal{S}|} \cdot B \cdot \sqrt{\frac{\epsilon}{2}}
\end{align}

where the third inequality uses Cauchy-Schwarz and the bounded activation assumption.

Setting $C = L_g \sqrt{\vert{}\mathcal{S}\vert{}} \cdot B / \sqrt{2}$ completes the proof.
\end{proof}

\section{Additional Theoretical Results}
\label{app:additional_theory}

\subsection{Convergence Analysis}

We provide a convergence guarantee for the CAE algorithm under standard assumptions.

\begin{assumption}[Smoothness]
\label{ass:smoothness}
The losses $\mathcal{L}_{\text{evol}}$ and $\mathcal{L}_{\text{circuit}}$ are $\beta$-smooth, i.e., their gradients are $\beta$-Lipschitz:
\begin{equation}
    \|\nabla \mathcal{L}(\theta) - \nabla \mathcal{L}(\theta')\| \leq \beta \|\theta - \theta'\|
\end{equation}
\end{assumption}

\begin{theorem}[Convergence Rate]
\label{thm:convergence}
Under Assumptions 4.5 and \ref{ass:smoothness}, running Algorithm 1 with learning rate $\eta = \frac{1}{\beta(1 + \lambda)}$ for $T$ iterations yields:
\begin{equation}
    \min_{t \leq T} \|\nabla \mathcal{L}_{\text{CAE}}(\theta^{(t)})\|^2 \leq \frac{2\beta(1+\lambda)(\mathcal{L}_{\text{CAE}}(\theta^{(0)}) - \mathcal{L}_{\text{CAE}}^*)}{T}
\end{equation}
where $\mathcal{L}_{\text{CAE}}^*$ is the optimal value.
\end{theorem}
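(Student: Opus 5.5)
The plan is the standard descent-lemma argument for nonconvex gradient methods, adapted to the ascent convention of Algorithm~\ref{alg:cae}. The update there is $\theta^{(t+1)} = \theta^{(t)} + \eta \nabla \mathcal{L}_{\text{CAE}}(\theta^{(t)})$, which is ascent on $\mathcal{L}_{\text{CAE}} = \mathcal{L}_{\text{evol}} - \lambda \mathcal{L}_{\text{anchor}}$. I will therefore work with $\Phi := -\mathcal{L}_{\text{CAE}}$, so that each step is plain gradient descent on $\Phi$. I will read $\mathcal{L}^*_{\text{CAE}}$ as the optimal (supremal) value, so that $\Phi^* := -\mathcal{L}^*_{\text{CAE}}$ is a lower bound for $\Phi$ and the telescoped gap in the statement has the correct sign. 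This gap is finite whenever $\mathcal{L}_{\text{CAE}}$ is bounded above; that holds since $\mathcal{L}_{\text{anchor}} \ge 0$ and $\mathcal{L}_{\text{evol}}$ is bounded by Assumption~\ref{ass:regularity}(1) together with bounded rewards. The statement's ``$\mathcal{L}_{\text{CAE}}(\theta^{(0)}) - \mathcal{L}_{\text{CAE}}^*$'' should be read with this sign convention, and I will say so explicitly.

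\textbf{Step 1 (smoothness of the combined objective).} By Assumption~\ref{ass:smoothness}, $\mathcal{L}_{\text{evol}}$ and $\mathcal{L}_{\text{anchor}}$ (called $\mathcal{L}_{\text{circuit}}$ there) each have $\beta$-Lipschitz gradients. The triangle inequality then shows that $\Phi$ is $\beta(1+\lambda)$-smooth. Only this step uses the weight $\lambda$, and it explains the step size $\eta = 1/(\beta(1+\lambda))$. Note that Assumption~\ref{ass:regularity} is needed only for the existence of gradients, via Proposition~\ref{prop:gradient} and almost-everywhere differentiability. I will simply assume that the smoothness assumption makes $\Phi$ genuinely $C^1$ along the trajectory.

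\textbf{Step 2 (per-step decrease).} Apply the descent lemma $\Phi(\theta') \le \Phi(\theta) + \langle \nabla\Phi(\theta), \theta'-\theta\rangle + \frac{\beta(1+\lambda)}{2}\|\theta'-\theta\|^2$ with $\theta' - \theta = -\eta \nabla \Phi(\theta)$. With the chosen $\eta$ this gives $\Phi(\theta^{(t+1)}) \le \Phi(\theta^{(t)}) - \frac{1}{2\beta(1+\lambda)}\|\nabla \Phi(\theta^{(t)})\|^2$. Since $\|\nabla\Phi\| = \|\nabla\mathcal{L}_{\text{CAE}}\|$, this is exactly a decrease in terms of the quantity in the theorem.

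\textbf{Step 3 (telescoping).} Sum Step 2 over $t = 0,\dots,T-1$ and use $\Phi(\theta^{(T)}) \ge \Phi^*$. This yields $\sum_t \|\nabla\mathcal{L}_{\text{CAE}}(\theta^{(t)})\|^2 \le 2\beta(1+\lambda)\bigl(\Phi(\theta^{(0)}) - \Phi^*\bigr)$. Bounding the minimum by the average gives the claimed $O(1/T)$ rate.

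\textbf{Main obstacle.} The main difficulty is bookkeeping rather than analysis. Algorithm~\ref{alg:cae} uses stochastic estimates: the sampled $g_{\text{evol}}$ from GRPO or Abs-ZERO rollouts, and a minibatch of $M$ reference samples for $g_{\text{anchor}}$. The deterministic rate above holds only if these are exact gradients. I would state the theorem for the exact-gradient idealization. If a stochastic version were wanted, the first thing I would try is to assume unbiased gradient estimates with variance at most $\sigma^2$. Taking expectations in Step 2 would then add a term $\eta\sigma^2/2$ per step, giving the same bound plus an additive $\sigma^2/2$ term, or a vanishing term if the step size is decayed.

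A secondary check concerns $\lambda$: the smoothness constant should be $\beta(1+\lambda)$ rather than $\beta\max(1,\lambda)$, and the triangle inequality in Step 1 gives exactly that.
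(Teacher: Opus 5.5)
Your argument is the paper's proof: the same descent lemma with smoothness constant $\beta(1+\lambda)$, the same per-step decrease $\frac{1}{2\beta(1+\lambda)}\|\nabla \mathcal{L}_{\text{CAE}}\|^2$, then telescoping and bounding the minimum by the average, and it is correct. Your switch to $\Phi = -\mathcal{L}_{\text{CAE}}$ is the right repair of a sign slip in the paper, which plugs the ascent step $\theta^{(t+1)}-\theta^{(t)} = \eta\nabla\mathcal{L}_{\text{CAE}}(\theta^{(t)})$ into the upper smoothness bound for $\mathcal{L}_{\text{CAE}}$ but writes $-\eta\|\nabla\mathcal{L}_{\text{CAE}}\|^2$; the consistent bound is yours, with gap $\mathcal{L}^*_{\text{CAE}} - \mathcal{L}_{\text{CAE}}(\theta^{(0)})$ and $\mathcal{L}^*_{\text{CAE}}$ the supremum.
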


\begin{proof}
By $\beta$-smoothness of $\mathcal{L}_{\text{CAE}} = \mathcal{L}_{\text{evol}} - \lambda \mathcal{L}_{\text{circuit}}$:
\begin{align}
    \mathcal{L}_{\text{CAE}}(\theta^{(t+1)}) &\leq \mathcal{L}_{\text{CAE}}(\theta^{(t)}) + \langle \nabla \mathcal{L}_{\text{CAE}}(\theta^{(t)}), \theta^{(t+1)} - \theta^{(t)} \rangle \\
    &\quad + \frac{\beta(1+\lambda)}{2} \|\theta^{(t+1)} - \theta^{(t)}\|^2
\end{align}

Substituting $\theta^{(t+1)} - \theta^{(t)} = \eta \nabla \mathcal{L}_{\text{CAE}}(\theta^{(t)})$:
\begin{align}
    \mathcal{L}_{\text{CAE}}(\theta^{(t+1)}) &\leq \mathcal{L}_{\text{CAE}}(\theta^{(t)}) - \eta \|\nabla \mathcal{L}_{\text{CAE}}(\theta^{(t)})\|^2 + \frac{\beta(1+\lambda)\eta^2}{2} \|\nabla \mathcal{L}_{\text{CAE}}(\theta^{(t)})\|^2 \\
    &= \mathcal{L}_{\text{CAE}}(\theta^{(t)}) - \eta \left(1 - \frac{\beta(1+\lambda)\eta}{2}\right) \|\nabla \mathcal{L}_{\text{CAE}}(\theta^{(t)})\|^2
\end{align}

With $\eta = \frac{1}{\beta(1+\lambda)}$:
\begin{equation}
    \mathcal{L}_{\text{CAE}}(\theta^{(t+1)}) \leq \mathcal{L}_{\text{CAE}}(\theta^{(t)}) - \frac{1}{2\beta(1+\lambda)} \|\nabla \mathcal{L}_{\text{CAE}}(\theta^{(t)})\|^2
\end{equation}

Summing from $t=0$ to $T-1$ and rearranging:
\begin{equation}
    \sum_{t=0}^{T-1} \|\nabla \mathcal{L}_{\text{CAE}}(\theta^{(t)})\|^2 \leq 2\beta(1+\lambda)(\mathcal{L}_{\text{CAE}}(\theta^{(0)}) - \mathcal{L}_{\text{CAE}}(\theta^{(T)}))
\end{equation}

Since $\mathcal{L}_{\text{CAE}}(\theta^{(T)}) \geq \mathcal{L}_{\text{CAE}}^*$:
\begin{equation}
    \min_{t \leq T} \|\nabla \mathcal{L}_{\text{CAE}}(\theta^{(t)})\|^2 \leq \frac{1}{T} \sum_{t=0}^{T-1} \|\nabla \mathcal{L}_{\text{CAE}}(\theta^{(t)})\|^2 \leq \frac{2\beta(1+\lambda)(\mathcal{L}_{\text{CAE}}(\theta^{(0)}) - \mathcal{L}_{\text{CAE}}^*)}{T}
\end{equation}
\end{proof}

\subsection{Sample Complexity for Circuit Identification}

We analyze the number of samples needed to reliably identify the safety circuit.

\begin{theorem}[Circuit Identification Sample Complexity]
\label{thm:sample_complexity}
Let $\mathcal{S}^*$ be the true safety circuit and $\hat{\mathcal{S}}_n$ be the circuit estimated from $n$ samples. Under Assumption~\ref{ass:regularity}, with probability at least $1 - \delta$:
\begin{equation}
    |\hat{\mathcal{S}}_n \triangle \mathcal{S}^*| \leq O\left( \frac{LK \log(LK/\delta)}{n \gamma^2} \right)
\end{equation}
where $\triangle$ denotes symmetric difference and $\gamma$ is the attribution threshold from Definition~\ref{def:safety_circuit}.
\end{theorem}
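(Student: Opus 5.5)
Since $\mathcal{S}^*$ is defined by thresholding the population attribution means $\mu^{(l,k)} := \mathbb{E}_{x \sim \mathcal{D}_{\text{harm}}}[\alpha^{(l,k)}_{y_{\text{refuse}}}(x;\theta_0)]$ at $\gamma$, and $\hat{\mathcal{S}}_n$ thresholds the empirical means $\hat\mu^{(l,k)}_n$ computed from $n$ i.i.d. harmful prompts, the plan is a uniform concentration argument over all $LK$ features followed by a counting step. The first step is to show that each attribution score is a bounded random variable. By Definition~\ref{def:attribution}, $\alpha^{(l,k)} = \frac{\partial \log \pi_{\theta_0}(y|x)}{\partial f^{(l,k)}} \cdot f^{(l,k)}$. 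Assumption~\ref{ass:regularity}(1) gives $|f^{(l,k)}| \le B$. For the derivative factor I would argue from Conditions 2--3 and the softmax structure that it is bounded by some constant $G$, so that $|\alpha^{(l,k)}| \le A := GB$. This step needs an extra boundedness ingredient, namely a bound on the gradient of the log-probability with respect to feature activations. I expect it to be the main obstacle, because Assumption~\ref{ass:regularity} controls $\theta \mapsto f$ but not $f \mapsto \log\pi$, and $\log\pi$ has unbounded gradient as $\pi \to 0$. I would first try to restrict to prompts where $\pi_{\theta_0}(y_{\text{refuse}}|x) \ge \tau$, which is guaranteed by Definition~\ref{def:safety} for the aligned model. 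On that set the gradient of $\log \pi$ is at most $1/\tau$ times the gradient of $\pi$, and the latter is bounded by the decoder norm and the Lipschitz constant of the downstream network.

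Given boundedness, Hoeffding's inequality yields $\Pr(|\hat\mu_n^{(l,k)} - \mu^{(l,k)}| \ge s) \le 2\exp(-ns^2/(2A^2))$. A union bound over the $LK$ features then gives, with probability at least $1-\delta$, simultaneously for all features,
\[
|\hat\mu_n^{(l,k)} - \mu^{(l,k)}| \le s_n := A\sqrt{\frac{2\log(2LK/\delta)}{n}}.
\]

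On this event, a feature can be misclassified only if it lies in the band $|\mu^{(l,k)} - \gamma| < s_n$. It remains to count the features in this band, and a margin condition is needed to turn that count into the stated rate. I would apply a Markov/Chebyshev-type count, weighting each misclassified feature by $(\hat\mu-\mu)^2/\gamma^2$. More precisely, I would use the normalization $\mu^{(l,k)} \in [0,\gamma]$ near the threshold and bound $\mathbf{1}[\text{misclassified}] \le s_n^2/\gamma^2$ in an averaged sense, i.e. assume the attribution means are spread so that the band of width $s_n$ around $\gamma$ contains at most a $(s_n/\gamma)^2$ fraction of the features. Summing over the $LK$ features then gives
\[
|\hat{\mathcal{S}}_n \triangle \mathcal{S}^*| \le LK \cdot \frac{s_n^2}{\gamma^2} = O\!\left(\frac{LK\log(LK/\delta)}{n\gamma^2}\right),
\]
with $A^2$ absorbed into the constant.

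The counting/margin step is the second weak point. Without a distributional assumption on $\{\mu^{(l,k)}\}$ near $\gamma$, no such bound can hold: many features could sit exactly at the threshold. I would therefore state the needed margin or density condition explicitly as part of Assumption~\ref{ass:regularity}'s use here, rather than claim it follows from that assumption alone.
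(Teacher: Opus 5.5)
Your first two steps are the paper's entire argument: Hoeffding for each feature's empirical attribution under a uniform bound, then a union bound over the $LK$ features. The paper fixes the deviation at $\gamma/2$ and concludes that $n\ge\frac{2B^2}{\gamma^2}\log\frac{2LK}{\delta}$ suffices.

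Both weak points you flag are genuine, and the paper's proof closes neither. First, it reads $|\alpha^{(l,k)}|\le B$ directly off Assumption~\ref{ass:regularity}(1). That condition bounds only the activation factor and says nothing about $\partial\log\pi_{\theta_0}/\partial f^{(l,k)}$. Second, it ends by noting that on the good event, features with true attribution $\ge\gamma$ have estimate $\ge\gamma/2$ and those below $\gamma/2$ have estimate $<\gamma$, and declares that ``this bounds the symmetric difference''. On that event one can only conclude that misclassified features have population attribution in $[\gamma/2,3\gamma/2)$. Nothing counts them, and no $1/n$ rate is derived.

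You are right that the statement cannot hold without a margin condition, and you can make this concrete. Suppose one feature's attribution scores are continuously and symmetrically distributed about mean exactly $\gamma$. Then $\Pr(\hat\mu^{(l,k)}_n<\gamma)=1/2$ for every $n$, so $|\hat{\mathcal S}_n\triangle\mathcal S^*|\ge 1$ with probability $1/2$. Meanwhile the right-hand side tends to $0$, so for $\delta<1/2$ the theorem fails.

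The weaker part of your proposal is the repairs.

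For boundedness, $\pi_{\theta_0}(y_{\text{refuse}}|x)\ge\tau$ already holds on all of $\mathcal D_{\text{harm}}$ by Definition~\ref{def:safety}, so no restriction is needed. A genuine restriction would in fact change the population means that define $\mathcal S^*$. However, the bound on $\partial\pi_{\theta_0}/\partial f^{(l,k)}$ is a new hypothesis: Condition~2 is Lipschitz in $\theta$, not in $f$. State it as such.

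For the counting step, the Chebyshev weighting does not justify $\mathbf 1[\text{misclassified}]\le s_n^2/\gamma^2$. The valid weight is $(\hat\mu-\mu)^2/(\mu-\gamma)^2$, which blows up at the threshold. So you are simply assuming that a band of width $s$ around $\gamma$ contains at most a $(s/\gamma)^2$ fraction of features, a condition reverse-engineered to give the target rate. The natural bounded-density assumption gives a fraction linear in $s_n/\gamma$, and hence only $O\bigl(LK\sqrt{\log(LK/\delta)/n}/\gamma\bigr)$.

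A cleaner fix, which recovers exactly the stated form, is a margin condition $|\mu^{(l,k)}-\gamma|\ge c\gamma$ for all features. Set $n_0:=\frac{2A^2}{c^2\gamma^2}\log\frac{2LK}{\delta}$.
\begin{itemize}
\item If $n\ge n_0$, your event gives $|\hat\mu^{(l,k)}_n-\mu^{(l,k)}|<s_n\le c\gamma$ for every feature, so $\hat{\mathcal S}_n=\mathcal S^*$.
\item If $n<n_0$, trivially $|\hat{\mathcal S}_n\triangle\mathcal S^*|\le LK<LK\,n_0/n$.
\end{itemize}
Hence $|\hat{\mathcal S}_n\triangle\mathcal S^*|\le\frac{2A^2}{c^2}\cdot\frac{LK\log(2LK/\delta)}{n\gamma^2}$ with probability at least $1-\delta$. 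This also shows that the $1/n$ form is just the trivial bound $LK\cdot\mathbf 1[n<n_0]\le LK\,n_0/n$. The real content is the exact-recovery sample size $n_0$, which is what the paper's computation actually yields.
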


\begin{proof}
The attribution score for each feature is estimated as:
\begin{equation}
    \hat{\alpha}^{(l,k)} = \frac{1}{n} \sum_{i=1}^{n} \alpha^{(l,k)}_{y_{\text{refuse}}}(x_i; \theta_0)
\end{equation}

By Hoeffding's inequality, for each feature:
\begin{equation}
    \mathbb{P}\left( |\hat{\alpha}^{(l,k)} - \alpha^{(l,k)}| > \epsilon \right) \leq 2\exp\left( -\frac{2n\epsilon^2}{B^2} \right)
\end{equation}

where $B$ bounds the attribution scores (Assumption~\ref{ass:regularity}(1)).

Taking a union bound over all $LK$ features and setting $\epsilon = \gamma/2$:
\begin{equation}
    \mathbb{P}\left( \exists (l,k): |\hat{\alpha}^{(l,k)} - \alpha^{(l,k)}| > \gamma/2 \right) \leq 2LK \exp\left( -\frac{n\gamma^2}{2B^2} \right)
\end{equation}

For this probability to be at most $\delta$, we need:
\begin{equation}
    n \geq \frac{2B^2}{\gamma^2} \log\left( \frac{2LK}{\delta} \right)
\end{equation}

When all estimation errors are at most $\gamma/2$, any feature with true attribution $\geq \gamma$ will have estimated attribution $\geq \gamma/2$, and any feature with true attribution $< \gamma/2$ will have estimated attribution $< \gamma$. This bounds the symmetric difference.
\end{proof}

\subsection{Robustness to Circuit Misspecification}

We analyze the effect of errors in circuit identification.

\begin{theorem}[Robustness]
\label{thm:robustness}
Let $\mathcal{S}$ be the identified circuit and $\mathcal{S}^*$ be the true safety circuit. If $\vert{}\mathcal{S} \triangle \mathcal{S}^*\vert{} \leq \epsilon_{\mathcal{S}} \vert{}\mathcal{S}^*\vert{}$ for some $\epsilon_{\mathcal{S}} \in [0, 1)$, then the safety bound (Theorem~\ref{thm:safety_bound}) degrades gracefully:
\begin{equation}
    \left| \pi_{\theta^{(T)}}(y_{\text{refuse}} | x) - \pi_{\theta_0}(y_{\text{refuse}} | x) \right| \leq C\sqrt{\epsilon} + C' \epsilon_{\mathcal{S}}
\end{equation}
where $C'$ depends on the maximum attribution score of misspecified features.
\end{theorem}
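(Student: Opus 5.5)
We will reduce the robustness claim to Theorem~\ref{thm:safety_bound} plus a correction term, by splitting the true circuit $\mathcal{S}^*$ into the part we anchored and the part we missed. Write $\mathcal{S}^* = (\mathcal{S}^*\cap\mathcal{S}) \cup (\mathcal{S}^*\setminus\mathcal{S})$. We will work with the same surrogate as in Step 2 of the proof of Theorem~\ref{thm:safety_bound}: refusal probability is a Lipschitz function $g$ of the \emph{true} safety features, $\pi_\theta(y_{\text{refuse}}\mid x)\approx g(f_{\mathcal{S}^*}(x;\theta))$.

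The key inequality is the triangle inequality in feature space:
\begin{equation*}
\|f_{\mathcal{S}^*}(x;\theta^{(T)})-f_{\mathcal{S}^*}(x;\theta_0)\| \le \|f_{\mathcal{S}^*\cap\mathcal{S}}(x;\theta^{(T)})-f_{\mathcal{S}^*\cap\mathcal{S}}(x;\theta_0)\| + \|f_{\mathcal{S}^*\setminus\mathcal{S}}(x;\theta^{(T)})-f_{\mathcal{S}^*\setminus\mathcal{S}}(x;\theta_0)\|.
\end{equation*}

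For the first term, the anchor loss over $\mathcal{S}$ dominates the anchor loss over any subset of $\mathcal{S}$, since each KL summand is nonnegative. So the bound $\mathcal{L}_{\text{anchor}}(\theta^{(T)})\le\epsilon$ transfers to $\mathcal{S}^*\cap\mathcal{S}$. The Pinsker and bounded-activation chain of Theorem~\ref{thm:safety_bound} then gives a contribution of at most $L_g B\sqrt{|\mathcal{S}|}\sqrt{\epsilon/2}=C\sqrt{\epsilon}$. The anchored-but-spurious features $\mathcal{S}\setminus\mathcal{S}^*$ do not enter $g$, so they affect only the constant, not the bound.

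For the second term, the missed features are unconstrained. Their total effect on refusal must therefore be controlled through attribution rather than through KL. We will linearize $g$ along these coordinates, with the attribution score of Definition~\ref{def:attribution} as the local sensitivity. Each missed feature $(l,k)$ can then shift $\log\pi(y_{\text{refuse}}\mid x)$, and hence $\pi$ (which lies in $[0,1]$), by at most about $\bar\alpha\cdot 2B$. Here $\bar\alpha$ is the maximum attribution magnitude among misspecified features, and $2B$ is the largest possible activation change under Condition 1. Summing over the at most $\epsilon_{\mathcal{S}}|\mathcal{S}^*|$ missed features gives $C'\epsilon_{\mathcal{S}}$ with $C'=2B\bar\alpha|\mathcal{S}^*|$, or a $\sqrt{\cdot}$ version if we instead route through the $\ell_2$ norm and Lipschitz continuity. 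Adding the two terms gives the claimed bound.

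The main obstacle is the second step. Attribution is a first-order, local quantity evaluated at $\theta_0$, while activations can move by up to $2B$ during evolution, far outside that local regime. Turning attribution into a global bound will require an extra assumption: either a bound on the partial derivative $\partial\log\pi/\partial f^{(l,k)}$ uniformly along the path from $\theta_0$ to $\theta^{(T)}$, or a mean-value argument with $C'$ defined via the supremum of the gradient. My first attempt will use the mean-value theorem along the straight line in feature space, and define $C'$ as $2B|\mathcal{S}^*|$ times that uniform gradient bound. Making this precise will depend on stating the Lipschitz surrogate $g$ explicitly, which the earlier proof only invokes approximately.
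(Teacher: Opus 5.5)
Your skeleton matches the paper's. You split the true circuit into anchored and missed features, charge the anchored part $C\sqrt{\epsilon}$ via the chain of Theorem~\ref{thm:safety_bound}, charge the missed part something linear in $\epsilon_{\mathcal{S}}$, and note that false positives cost nothing. The paper writes an explicit third term for $\mathcal{S}\setminus\mathcal{S}^*$ and puts all of $\mathcal{S}$ in the first term. Your use of $\mathcal{S}^*\cap\mathcal{S}$ with nonnegativity of the KL summands is equivalent and cleaner.

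You genuinely diverge on the missed-feature term, and there you are right where the paper is not. The paper says that, ``by the definition of $\mathcal{S}^*$,'' the missed features have total attribution at most $\gamma|\mathcal{S}^*\setminus\mathcal{S}|$. But Definition~\ref{def:safety_circuit} admits a feature when its expected attribution is \emph{at least} $\gamma$, so the inequality runs the wrong way. Even corrected, an attribution bound is not a bound on the displacement $\|f_{\mathcal{S}^*\setminus\mathcal{S}}(x;\theta^{(T)})-f_{\mathcal{S}^*\setminus\mathcal{S}}(x;\theta_0)\|$ that the final Lipschitz step needs. Your point that attribution at $\theta_0$ is local is exactly what is missing. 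Attribution alone cannot fix it: $\alpha^{(l,k)}_{y}=(\partial\log\pi/\partial f^{(l,k)})\,f^{(l,k)}$ is small wherever a sparse feature is near zero, whatever its sensitivity. Your heuristic $\bar\alpha\cdot 2B$ also double-counts the activation factor. The correct first-order quantity is gradient times $\Delta f$, which your final version uses.

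However, the ``main obstacle'' is largely self-imposed. Apply the Lipschitz surrogate $g$ you already adopt to your displayed triangle inequality. Let $n:=|\mathcal{S}^*\setminus\mathcal{S}|\le|\mathcal{S}\triangle\mathcal{S}^*|\le\epsilon_{\mathcal{S}}|\mathcal{S}^*|$. Condition~1 bounds the missed displacement by $2B\sqrt{n}$, and since $n$ is a nonnegative integer, $\sqrt{n}\le n$. So the missed contribution is at most $2BL_g\,n\le 2BL_g|\mathcal{S}^*|\,\epsilon_{\mathcal{S}}$. The $\ell_2$ route therefore already gives the linear form with $C'=2BL_g|\mathcal{S}^*|$, with no mean-value argument or new hypothesis.

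Your coordinatewise mean-value route buys a sharper $C'$. It depends only on the sensitivities $\sup|\partial g/\partial f^{(l,k)}|$ of the misspecified coordinates, which is the honest form of the statement's ``$C'$ depends on the misspecified features,'' at the cost of assuming that bound.

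One caveat applies to both your argument and the paper's. The hypothesis $\mathcal{L}_{\text{anchor}}(\theta^{(T)})\le\epsilon$ controls only an average over $x$. The $C\sqrt{\epsilon}$ term you import from Theorem~\ref{thm:safety_bound} therefore holds only in expectation, and the conclusion should be stated for $\mathbb{E}_x$, not pointwise in $x$ as written. That imported chain also includes a step from a total-variation bound on activation \emph{laws} to a same-input displacement bound, which you are taking on faith. The missed-feature term, by contrast, holds pointwise.
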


\begin{proof}
Decompose the safety circuit activation difference:
\begin{align}
    \|f_{\mathcal{S}^*}(\theta^{(T)}) - f_{\mathcal{S}^*}(\theta_0)\| &\leq \|f_{\mathcal{S}}(\theta^{(T)}) - f_{\mathcal{S}}(\theta_0)\| \\
    &\quad + \|f_{\mathcal{S}^* \setminus \mathcal{S}}(\theta^{(T)}) - f_{\mathcal{S}^* \setminus \mathcal{S}}(\theta_0)\| \\
    &\quad + \|f_{\mathcal{S} \setminus \mathcal{S}^*}(\theta^{(T)}) - f_{\mathcal{S} \setminus \mathcal{S}^*}(\theta_0)\|
\end{align}

The first term is bounded by the circuit constraint: $O(\sqrt{\epsilon})$.

The second term represents missed safety features (false negatives). These are unconstrained, but by the definition of $\mathcal{S}^*$, their total attribution is at most $\gamma \vert{}\mathcal{S}^* \setminus \mathcal{S}\vert{} \leq \gamma \epsilon_{\mathcal{S}} \vert{}\mathcal{S}^*\vert{}$.

The third term represents incorrectly included features (false positives). Constraining these does not hurt safety, only potentially capability.

Combining via the Lipschitz property of refusal probability completes the proof.
\end{proof}

\section{Scalability to Larger Models and Longer Evolution Horizons} 
\label{app:scalability}

To demonstrate that Circuit-Anchored Evolution (CAE) is not limited to smaller models or short evolutionary windows, we conduct extended experiments addressing both model scale and evolution duration. 

\paragraph{Scaling to Larger Models.} 
We apply CAE to Gemma-2-9B-IT, evolving it for 500 steps. As shown in Table~\ref{tab:scale_model}, CAE preserves safety almost perfectly (98.47\%) with no significant capability loss compared to unconstrained evolution. This demonstrates that our method does not trade off capability for safety, even as model capacity scales up.

\begin{table}[h]
\centering
\caption{Performance on larger model (Gemma-2-9B-IT, 500 steps).}
\label{tab:scale_model}
\begin{tabular}{lcc}
\toprule
\textbf{Method} & \textbf{Safety Score (\%)} & \textbf{Capability Score (\%)} \\
\midrule
Unconstrained Evolution & 69.85 & 64.72 \\
Evolution w/ CAE (Ours) & \textbf{98.47} & 64.58 \\
\bottomrule
\end{tabular}
\end{table}

\paragraph{Durability over Longer Evolution.} 
Standard self-evolution typically converges or is stopped around 500 steps. To stress-test the durability of our anchoring mechanism, we push the evolution of Gemma-2-2B-IT to an extreme horizon of 5,000 steps. 

\begin{table}[h]
\centering
\caption{Performance under extreme evolution horizon (Gemma-2-2B-IT, 5000 steps).}
\label{tab:scale_steps}
\begin{tabular}{lcc}
\toprule
\textbf{Method} & \textbf{Safety Score (\%)} & \textbf{Capability Score (\%)} \\
\midrule
Unconstrained Evolution & 61.24 & 45.82 \\
Evolution w/ CAE (Ours) & \textbf{95.81} & 45.37 \\
\bottomrule
\end{tabular}
\end{table}

As shown in Table~\ref{tab:scale_steps}, unconstrained evolution exhibits further safety degradation over longer horizons (dropping to 61.24\%), underscoring the severity of the misevolution problem. In stark contrast, CAE remains remarkably stable even at 5,000 steps, maintaining a 95.81\% safety score. This suggests that the circuit anchoring mechanism provides durable, long-term protection against safety degradation, effectively preventing the model from drifting into harmful subspaces regardless of the evolution length.


\newpage

\end{document}